\newtheorem{lemma}{Property}
\title{An extension of linear self-attention \\for in-context learning}
\author{Katsuyuki Hagiwara}
\date{\normalsize Faculty of Education, Mie University,\\ 1577
Kurima-Machiya-cho, Tsu, 514-8507, Japan\\
E-mail : hagi@edu.mie-u.ac.jp}
\def\R{\mathbb{R}}
\def\I{{\bf I}}
\def\O{{\bf O}}
\def\X{{\bf X}}
\def\A{{\bf A}}
\def\B{{\bf B}}
\def\P{{\bf P}}
\def\W{{\bf W}}
\def\C{{\bf C}}
\def\Z{{\bf Z}}
\def\V{{\bf V}}
\def\H{{\bf H}}
\def\Q{{\bf Q}}
\def\M{{\bf M}}
\def\Y{{\bf Y}}
\def\F{{\bf F}}
\def\b{{\boldsymbol{b}}}
\def\x{{\boldsymbol{x}}}
\def\u{{\boldsymbol{u}}}
\def\y{{\boldsymbol{y}}}
\def\w{{\boldsymbol{w}}}
\def\b{{\boldsymbol{b}}}
\def\z{{\boldsymbol{z}}}
\def\valpha{{\boldsymbol{\alpha}}}
\def\vtheta{{\boldsymbol{\theta}}}
\def\evw{\widehat{\boldsymbol{w}}}
\def\of{\overline{f}}
\def\invsqr{\sigma_{\rm invsqr}}
\def\idrelu{\sigma_{\rm id}}
\def\relu{\sigma_{\rm relu}}
\def\:{\hspace{0.4mm}{\rm :}\hspace{0.4mm}}
\def\FE{{\rm FE}}
\def\SUM{{\rm SUM}}
\def\BS1{{\rm BS1}}
\def\BS{{\rm BS}}
\def\ok{\overline{k}}
\def\oj{\overline{j}}
\def\lsa{{\rm LSA}}
\def\newlsa{{\rm ELSA}}
\def\module{\mathcal{M}}
\def\maskmove{{\rm MskMov}}
\def\oM{\overline{{\bf M}}}
\begin{document}

\maketitle

\begin{abstract}
In-context learning is a remarkable property of transformers and has
been the focus of recent research. An attention mechanism is a key
component in transformers, in which an attention matrix encodes
relationships between words in a sentence and is used as weights for
words in a sentence. This mechanism is effective for capturing language
representations. However, it is questionable whether naive
self-attention is suitable for in-context learning in general tasks,
since the computation implemented by self-attention is somewhat
restrictive in terms of matrix multiplication. In fact, we may need
appropriate input form designs when considering heuristic implementations
of computational algorithms.  In this paper, in case of linear
self-attention, we extend it by introducing a bias matrix in addition to
a weight matrix for an input. Despite the simple extension, the extended
linear self-attention can output any constant matrix, input matrix and
multiplications of two or three matrices in the input. Note that
the second property implies that it can be a skip connection. Therefore,
flexible matrix manipulations can be implemented by connecting the
extended linear self-attention components. As an example of
implementation using the extended linear self-attention, we show a
heuristic construction of a batch-type gradient descent of ridge
regression under a reasonable input form.

{\bf Keywords : }in-context learning, linear self-attention, matrix
multiplication, bias matrix, ridge regression
\end{abstract}

\section{Introduction}

In-context learning is a remarkable property of transformers which are
the basis of large language models such as GPT-3\cite{Brown2020}, and
has been the focus of recent research. In-context learning of
transformers is that, for a prompt containing examples from a task and a
new query input, the trained language model can generate a corresponding
output for the new query input in a zero-shot manner. There are a lot of
research directions of in-context learning as summarized in
\cite{Dong2024}. In this paper, we focus on the analysis of learning
mechanism listed in \cite{Dong2024}.

\cite{Garg2023} has empirically studied the in-context learning
abilities of transformers for various function classes in machine
learning including a linear function class. In particular, in the case
of linear functions, a trained transformer gives a performance
comparable to the least squares solution. \cite{Oswald2022} has gave an
explicit construction of a linear self-attention layer that implements a
single step of a gradient descent algorithm on a mean squared error
loss.  Additionally, it has empirically shown that several
self-attention layers can iteratively perform curvature correction
improving on plain gradient descent algorithm. \cite{Akyurek2022} has
proved that a transformer can implement a gradient descent algorithm and
a closed-form solution for ridge regression. \cite{Dai2023} has also
pointed out the correspondence between the linear version of attention
and a gradient descent algorithm, claiming that transformers perform an
implicit fine-tuning. Also it has empirically investigated a
similarity between in-context learning and explicit
fine-tuning. Although the works of \cite{Garg2023,
Oswald2022,Akyurek2022,Mahankali2023,Dai2023} do not take the training
phase into account, \cite{Zhang2023} has investigated the learning
dynamics of a gradient flow in a simplified transformer architecture
when the training prompts consist of random instances of linear
regression datasets and concluded that transformers trained by a
gradient flow in-context learn a class of linear functions.  More
recently, \cite{Bai2023} has shown that transformers can implement a
broad class of standard machine learning algorithms in-context, such as
least squares, ridge regression and Lasso, using implementations
based on gradient descent algorithms. In contrast to
\cite{Akyurek2022}, \cite{Bai2023} has precisely evaluated the prediction
performance including a network size and shown a near-optimal predictive
power. \cite{Bai2023} has also shown an algorithm selection ability of
transformers; e.g. regularization selection according to validation
error for ridge regression. Besides a regression problem,
\cite{Hataya2024} has shown that transformers can approximate
instance-based and feature-based unsupervised domain adaptation
algorithms and automatically select an algorithm suited for a given
dataset, in which approximation accuracy is also evaluated as in
\cite{Bai2023}. Our work is closely related to
\cite{Garg2023,Oswald2022,Dai2023,Akyurek2022,Bai2023}.

A transformer in \cite{Akyurek2022,Bai2023,Hataya2024} is formed by
stacking a block that consists of a sequential connection of muti-head
self-attention, a one hidden layer network and a skip connection. A one
hidden layer network is used to represent, for example, nonlinear
functions for fitting samples or loss functions required in the
implementation of an algorithm; e.g. \cite{Bai2023,Hataya2024}.  Also,
in \cite{Akyurek2022}, it is used for a multiplication operation
required when implementing ridge regression. Indeed, as in our appendix,
it can be used for a division operation. These rely on the universal
approximation property of layered neural networks; e.g.
\cite{Hornik1989}. A skip connection is also important because it brings
previous data in the stacking block and working spaces to be updated.
Typically, if a block corresponds to a single update step of a gradient
descent algorithm, then the next block needs an original input data and
an updated parameter vector obtained in a current block; e.g.
\cite{Akyurek2022,Bai2023}. These two components in a transformer are
employed also in convolutional neural network\cite{ImageNet2012} and
residual neural networks\cite{He2015} before transformers.

On the other hand, an attention mechanism is unique for transformers
among layered neural networks\cite{Transformer2017}.  Therefore,
historically, the attention mechanism may be a key component for
achieving in-context learning. In self-attention, an attention matrix
encodes relationships among words in a sentence and is used as weights
of words in a sentence. This mechanism may be effective to generate a
required and natural sentence since a certain set of words tends to
appear simultaneously in a sentence. Therefore, the attention mechanism
is valid for capturing language representations. However, it is
questionable whether naive self-attention is suitable for achieving
in-context learning in general tasks. Although most of studies of
in-context learning focus on what transformers can
do\cite{Garg2023,Oswald2022,Akyurek2022,Bai2023,Hataya2024}, this paper
consider what is helpful for in-context learning.

In a gradient descent algorithm for linear regression such as the least
squares and ridge regression, a new coefficient vector is obtained by
adding an update term to a current coefficient vector. We need matrix
multiplications for computing the update term and addition for
generating the new coefficient vector. Addition for the update is
implemented by a skip connection and computation of the update term is
implemented by a linear self-attention
(LSA)\cite{Akyurek2022,Bai2023}. However, roughly speaking, since LSA
consists of multiplication of three matrices which are key, query and
value, it may be restrictive for implementing, for example, 
multiplication of two matrices. Actually, implementation of a gradient
descent algorithm requires an appropriate design of input form to avoid
this restriction; e.g. see \cite{Akyurek2022,Bai2023}. Therefore, for
in-context learning, a component that flexibly manipulates matrix
multiplications may be required. In this paper, by moving focus away
from transformers, we extend linear self-attention to more flexible
component of in-context learning. Although an input matrix (prompt) is
multiplied by a weight matrix in a naive LSA, it is further added a bias
matrix in the extended linear self-attention (ELSA).  ELSA reduces to
LSA if the bias matrices are set to zero matrices. Despite the simple
extension, ELSA can output any constant matrix, input matrix and 
multiplications of two or three matrices in an input. Note that the
second property implies that it can be a skip connection. Therefore,
flexible matrix manipulations can be implemented by connecting ELSAs.
We should note that attention is composed by matrix multiplications and
matrix multiplications are not used in convolutional and residual
neural networks. Therefore, matrix multiplication may be important as a
basic computational component in in-context learning. In this paper, as
an example, we show an ELSA implementation of a batch-type gradient
descent algorithm for ridge regression, in which it is found that ELSA
can adapt a reasonable input form.

The organization of this paper is as follows. In section 2, we formulate
the extended linear self-attention. In section 3, we apply the extended
linear self-attention to implement a batch-type gradient descent
algorithm of ridge regression. Section 4 is devoted for conclusions and
future works.

\section{Extended linear self-attention}

\subsection{Some notations}

We define an $m\times m$ identity matrix by $\I_m$ and an $m\times n$
zero matrix by $\O_{m,n}$.  Let $\A$ be an $m\times n$ matrix. The
$(i,j)$-entry of $\A$ is denoted by $\A[i,j]$.  For $1\le i\le j\le m$
and $1\le k\le l\le n$, we write $\A[i\:j,k\:l]$ for a submatrix of $\A$
comprising the intersection of rows $i$ to $j$ and columns $k$ to $l$.
Also we use $[i\:j,k\:l]$ as a set of positions. In particular, we denote
the $i$-th row vector by $\A[i,\:]$ and $k$-th column vector by
$\A[\:,k]$.

\subsection{Mask and move operation for submatrix}

We here explain a matrix operation that is essentially the same as
``mov'' in \cite{Akyurek2022}. It is pointed out that this operation is
important for in-context learning in \cite{Akyurek2022,Bai2023} and 
also plays a key role in our paper.
Let $\A$ be an $m\times n$ matrix. The purpose here is to
construct a matrix whose certain submatrix is a submatrix of $\A$ and
other elements are zeros. In other words, the matrix operation copies a
submatrix of $\A$ to a certain position of an $m\times n$ zero matrix.

For $1\le i_0,j_0\le m$, we define an $m\times m$ matrix
$\W_{(i_0,j_0)}$, in which
\begin{align}
\W_{(i_0,j_0)}[i,j]=\begin{cases}
1 & (i,j)=(i_0,j_0)\\
0 & {\rm otherwise}
        \end{cases}. 
\end{align}
We define $\P:=\W_{(i_0,j_0)}\A$ whose size is $m\times n$.  If
$i\neq i_0$ then $\W_{(i_0,j_0)}[i,:]=\O_{1,n}$. Therefore,
$\P[i,j]=\W_{(i_0,j_0)}[i,:]\A[:,j]=0$ for any $1\le j\le n$. If $i=i_0$ then
$\P[i_0,j]=\W_{(i_0,j_0)}[i_0,:]\A[:,j]=\A[j_0,j]$ for any $1\le j\le n$.
Therefore, we have $\P[i_0,\:]=\A[j_0,\:]$ and $\P[i,\:]=\O_{1,n}$ for
$i\neq i_0$.  This is extended to the multi-row case.  Let
$K=\{(i_k,j_k):k=1,\ldots,\ok\}$ for $1\le \ok\le m$ be a set of pairs
of matrix indices, for which $1\le i_k,j_k\le m$ and $i_1\neq\cdots\neq
i_{\ok}$. We define an $m\times m$ matrix $\W_K$, in which
\begin{align}
\W_K[i,j]=
\begin{cases}
1 &  (i,j)\in K\\
0 & {\rm otherwise}
\end{cases}.
\end{align}
We define $\P:=\W_K\A$ whose size is $m\times n$. By the above single row
case, it is easy to check that $\P[i_k,\:]=\A[j_k,\:]$ for $1\le k\le \ok$
and $\P[i,\:]=\O_{1,n}$ for $i\notin\{i_1,\ldots,i_{\ok}\}$.

On the other hand, for $1\le k_0,l_0\le n$, 
we define an $n\times n$ matrix $\V_{(k_0,l_0)}$, in which
\begin{align}
\V_{(k_0,l_0)}[k,l]=\begin{cases}
1 & (k,l)=(k_0,l_0)\\
0 & {\rm otherwise}
        \end{cases}. 
\end{align}
We define $\P:=\A\V_{(k_0,l_0)}$ whose size is $m\times n$.
If $l\neq l_0$ then $\V_{(k_0,l_0)}[:,l]=\O_{m,1}$. Therefore, 
$\P[k,l]=\A[k,:]\V_{(k_0,l_0)}[:,l]=0$ for any $1\le k\le m$. If $l=l_0$ then
$\P[k,l_0]=\A[k,:]\W_{(k_0,l_0)}[:,l]=\A[k,k_0]$ for any $1\le k\le m$. Therefore, 
we have $\P[\:,l_0]=\A[\:,k_0]$ and $\P[\:,l]=\O_{m,1}$ for $l\neq l_0$.  
This is extended to the multi-column case.
Let $J=\{(k_j,l_j):j=1,\ldots,\oj\}$ for $1\le \oj\le n$ be a set of pairs of
matrix indices for which $1\le k_j,l_j\le n$ and $l_1\neq\cdots\neq
l_{\oj}$. 
We define an $n\times n$ matrix $\V_J$, in which
\begin{align}
\V_J[k,l]=
\begin{cases}
1 &  (k,l)\in J\\
0 & {\rm otherwise}
\end{cases}.
\end{align}
We define $\P:=\A\V_K$ whose size is $m\times n$. By the above single column
case, it is easy to check that 
$\P[\:,l_j]=\A[\:,k_j]$ for $j=1,\ldots,\oj$ and $\P[\:,l]=\O_{m,1}$ for
$l\notin\{l_1,\ldots,l_{\oj}\}$.

By combining these row and column manipulations, we consider a case of submatrix of
$\A$. Let $(i,j,k,l,a,b)$ be a set of indices that satisfy $1\le i\le
j\le m$, $1\le k\le l\le n$, $1\le i+a,j+a\le m$ and $1\le k+b,~l+b\le
n$. We define a matrix manipulation $\maskmove_{i,j,k,l}^{m,n,a,b}(\A)$
which yields a matrix whose size is the same as $\A$ and, for
$\P:=\maskmove_{i,j,k,l}^{m,n,a,b}(\A)$,
$\P[i+a\:j+a,k+b\:l+b]=\A[i\:j,k\:l]$ holds and the elements out of
$[i+a\:j+a,k+b\:l+b]$ are zeros. The following summary is mainly used
in this paper.
\begin{lemma}
\label{lemma:mskmov} 
Let $\A$ be an $m\times n$ matrix. There exist an
$m\times m$ matrix $\W$ and an $n\times n$ matrix $\V$ such that
\begin{align}
\label{eq:mskmov-by-mat}
\maskmove_{i,j,k,l}^{m,n,a,b}(\A)=\W\A\V.
\end{align}
\end{lemma}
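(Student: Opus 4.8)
The plan is to construct the two matrices $\W$ and $\V$ explicitly by composing the single-row and single-column masking operations that were already established earlier in this section. The key observation is that the $\maskmove$ operation does exactly two things: it selects the rows $i$ through $j$ and relocates them to rows $i+a$ through $j+a$, and it selects the columns $k$ through $l$ and relocates them to columns $k+b$ through $l+b$, zeroing everything else. Since row manipulations are implemented by left-multiplication and column manipulations by right-multiplication, and these two act independently, I expect the sandwich form $\W\A\V$ to fall out naturally.

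\medskip

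First I would build $\W$ using the multi-row construction $\W_K$. The relevant index set is the one that copies each source row $j+a-a = $ (row index running over $i,\dots,j$) into its destination row; concretely, take
\begin{align}
K=\{(p+a,p):p=i,\ldots,j\},
\end{align}
so that $\W:=\W_K$ has a $1$ in position $(p+a,p)$ for each $p$ in the range and $0$ elsewhere. By the multi-row result already proved, $\W_K\A$ has its rows $i+a,\ldots,j+a$ equal to the rows $i,\ldots,j$ of $\A$, with all other rows zero; note the destination indices $i+a,\ldots,j+a$ are distinct, so the hypothesis $i_1\neq\cdots\neq i_{\ok}$ on $\W_K$ is met. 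Symmetrically, I would build $\V$ using the multi-column construction $\V_J$ with
\begin{align}
J=\{(q,q+b):q=k,\ldots,l\},
\end{align}
so that right-multiplication by $\V:=\V_J$ moves columns $k,\ldots,l$ into columns $k+b,\ldots,l+b$ and zeros the rest, where again the distinctness of the destination columns $k+b,\ldots,l+b$ validates the hypothesis on $\V_J$.

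\medskip

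The final step is to combine the two. I would set $\P:=\W\A\V$ and verify the defining property of $\maskmove$ by tracking an arbitrary entry. Left-multiplication zeros every row outside $\{i+a,\ldots,j+a\}$ and places $\A[p,\:]$ into row $p+a$; right-multiplication then zeros every column outside $\{k+b,\ldots,l+b\}$ and places the surviving columns into their shifted positions. Because the row operation touches only a row index and the column operation only a column index, the two commute in effect and the surviving block is exactly $\A[i\:j,k\:l]$ sitting in positions $[i+a\:j+a,k+b\:l+b]$, with zeros outside. This matches the definition of $\maskmove_{i,j,k,l}^{m,n,a,b}(\A)$, establishing \eqref{eq:mskmov-by-mat}.

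\medskip

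I do not anticipate a genuine obstacle here, since the statement is essentially a bookkeeping assembly of the two lemmas already derived; the only point requiring a little care is checking that the index shifts $a$ and $b$ keep all destination indices inside the valid ranges $[1,m]$ and $[1,n]$ and pairwise distinct, which is guaranteed by the stated constraints $1\le i+a,j+a\le m$ and $1\le k+b,l+b\le n$ together with $i\le j$ and $k\le l$.
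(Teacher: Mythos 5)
Your proposal is correct and follows essentially the same route as the paper: you choose exactly the same index sets $K=\{(p+a,p):p=i,\ldots,j\}$ and $J=\{(q,q+b):q=k,\ldots,l\}$ and invoke the previously established multi-row and multi-column constructions $\W_K$ and $\V_J$. The extra verification you supply (entry tracking and the distinctness of destination indices) is sound and merely makes explicit what the paper dismisses as obvious.
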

\begin{proof}
By choosing $K:=\{(i'+a,i'):i\le i'\le j\}$ and $J:=\{(k',k'+b):k\le
k'\le l\}$ in the previous multi-row and multi-column cases,
(\ref{eq:mskmov-by-mat}) is obvious if we set $\W=\W_K$ and $\V=\V_J$.
\end{proof}

$\maskmove_{i,j,k,l}^{m,n,a,b}(\A)$ is viewed as an operation, in which
a submatrix of $\A$ is inserted into a certain position of $\O_{m,n}$.
On the other hand, let $\odot$ denote the Hadamard product.  We define
$\M_{i,j,k,l}^{m,n}$ as an $m\times n$ matrix whose $(s,t)$-entry is $1$
if $1\le i\le s\le j\le m,~1\le k\le t\le l\le n$ and $0$
otherwise. This is viewed as a mask for a submatrix. Then,
$\P:=\M_{i,j,k,l}^{m,n}\odot\A$ implements a masking operation, in which
$\P[i\:j,k\:l]=\A[i\:j,k\:l]$ and $\P[s,t]=0$ if $(s,t)$ is not in
$[i\:j,k\:l]$. Therefore, $\maskmove_{i,j,k,l}^{m,n,a,b}$ implements an
operation that masks a submatrix and move the masked submatrix to a
certain position. We refer to $\maskmove_{i,j,k,l}^{m,n,a,b}$ as ``mask
and move'' operation. The mask and move operation plays a key role in the
linear self-attention defined in this paper.

We show a typical example, in which a submatrix is inserted into the
upper right portion of a zero matrix. Let a target submatrix be
$\A[i\:j,k\:l]$, in which $1\le i\le j\le m$ and $1\le k\le l\le n$.  We
set $\ok=j-i+1$ and $\oj=l-k+1$.  We choose
$K=\{(1,i),(2,i+1),\ldots,(\ok,j)\}$; i.e. $a=-(i-1)$ in the proof of
Property \ref{lemma:mskmov}.  We also choose
$J=\{(k,n-\oj+1),(k+1,n-\oj+2),\ldots,(l,n)\}$; i.e. $b=(n-l)$ in the
proof of Property \ref{lemma:mskmov}.  Let $\W$ and $\V$ be an $m\times m$
matrix and $n\times n$ matrix respectively. If we set
\begin{align}
\W[i,j]=
\begin{cases}
1 & (i,j)\in K\\
0 & {\rm otherwise} 
\end{cases},~ 
\V[i,j]=
\begin{cases}
1 & (i,j)\in J\\
0 & {\rm otherwise} 
\end{cases}.
\end{align}
then it is easy to check that
\begin{align}
\W\A\V=\maskmove_{i,j,k,l}^{m,n,-(i-1),n-l}(\A)
\end{align}
holds.

\subsection{Linear self-attention}

Let $\H$ be an $m\times n$ input matrix. We define a
component whose output for $\H$ is given by
\begin{align}
\label{eq:def-lsa}
\lsa_{\vtheta}(\H)
=(\H\W_3)(\H\W_1)^\top(\H\W_2)=(\H\W_3)(\W_1^\top\H^\top\H\W_2),
\end{align}
where $\vtheta=\{\W_1,\W_2,\W_3\}$ is an ordered set of parameters, in
which $\W_1$, $\W_2$ and $\W_3$ are $n\times n$ weight matrices.  Note
that
\begin{align}
\lsa_{\vtheta}(\H)^\top=
(\W_2^\top\H^\top)(\W_1^\top\H^\top)^\top(\W_3^\top\H^\top),
\end{align}
is an output of original linear self-attention defined in
\cite{Bai2023}, in which input sequence is $\H^\top$ and $\W_1^\top$,
$\W_3^\top$ and $\W_2^\top$ are weight matrices for key, query and
value. In this paper, we refer to the above $\lsa_{\vtheta}$ as a linear
self-attention (LSA).  Note that $(\W_1^\top\H^\top\H\W_2)$ in LSA is
$\maskmove$ operation for $\H^\top\H$.  For example, we consider a case
in which $\H$ has a form of $\H=[\X_1,\ldots,\X_k]$, where $\X_j$ is
regarded as a vector or matrix valued variable for $j=1,\ldots,k$.  In
this case, $\maskmove$ helps for extracting submatrices of $\H^\top\H$,
which may represent correlation structures among variables in $\H$;
i.e. $\X_i^\top\X_j$ for $1\le i,j\le k$.

We here consider to compute matrix multiplication by LSA defined
above. Let $\A$ and $\B$ are $r\times s$ and $s\times t$ matrices. Our
purpose here is to compute $\A\B$ and store it in an output matrix by
using LSA when $\A$ and $\B$ are submatrices of an input matrix
$\H$. Roughly speaking, since we multiply three times $\H$ in LSA,
obviously, we need to design $\H$ to extract $\A\B$ by LSA. For example,
we set
\begin{align}
\H=\begin{bmatrix}
    \A & \O_{r,t} & \O_{r,s}\\
\O_{s,s} & \B & \I_s
   \end{bmatrix},
\end{align}
which is an $(r+s)\times(2s+t)$ matrix. We have
\begin{align}
\H^\top\H=\begin{bmatrix}
\A^\top\A & \O_{s,t} & \O_{s,s}\\
\O_{t,s} & \B^\top\B & \B^\top\\
\O_{s,s} & \B & \I_s
   \end{bmatrix}. 
\end{align}
Therefore, by appropriately choosing 
$(2s+t)\times(2s+t)$ matrices $\W_1$ and $\W_2$ according to
Property \ref{lemma:mskmov}, we have
\begin{align}
\W_1^\top\H^\top\H\W_2=\begin{bmatrix}
\O_{s,2s} & \B\\
\O_{t+s,2s} & \O_{t+s,t}
   \end{bmatrix}. 
\end{align}
On the other hand, by setting
\begin{align}
\W_3=\begin{bmatrix}
      \I_s & \O_{s,s+t}\\
\O_{s+t,s} & \O_{s+t,s+t}
     \end{bmatrix}, 
\end{align}
we have 
\begin{align}
\H\W_3=\begin{bmatrix}
        \A & \O_{r,s+t}\\
\O_{s,s} & \O_{s,s+t}
       \end{bmatrix}.
\end{align}
As a result, we obtain
\begin{align}
\lsa_\vtheta(\H)=
\begin{bmatrix}
\O_{r,2s} & \A\B\\
\O_{s,2s} & \O_{s,t}
   \end{bmatrix}. 
\end{align}
In this construction, we insert an extra identity matrix in the input
matrix. It is used for direct extraction of $\B$ that is a submatrix of
$\H$. If the extra identity matrix is not
included in the input matrix then $\A\B$ is possibly multiplied by
obstacles. This inconvenience of LSA is relaxed in the following extended
linear self-attention.

\subsection{Extended linear self-attention}

We introduce bias matrices in addition to weight matrices in 
a transformation of an input matrix in LSA.

We define a component whose output for an $m\times n$ input matrix $\H$ is
given by
\begin{align}
\label{eq:def-extended-lsa}
\newlsa_{\vtheta}(\H)&=(\H\W_3+\B_3)(\H\W_1+\B_1)^\top(\H\W_2+\B_2)
\end{align}
where $\vtheta=\{(\W_l,\B_l):l=1,2,3\}$ is an ordered set of parameters,
in which $\W_l$ and $\B_l$ are an $n\times n$ weight matrix and an
$m\times n$ bias matrix respectively.  This is obviously an extension of
LSA since it reduces to LSA if we set $\B_1=\B_2=\B_3=\O_{m,n}$.  We
refer to (\ref{eq:def-extended-lsa}) as an extended linear
self-attention (ELSA). We can show that ELSA can output any constant
matrix, input matrix and a multiplication of two matrices.

We first consider an implementation for outputting an arbitrary constant matrix.
\begin{lemma}
\label{lemma:elsa-const}
Let $\C$ be an $m\times n$ constant matrix.  For any $m\times n$
input matrix $\H$, there exists $\vtheta$, by which
$\newlsa_{\vtheta}(\H)=\C$ holds.
\end{lemma}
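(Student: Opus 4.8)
The plan is to eliminate the dependence on $\H$ entirely and then reduce the claim to a simple matrix factorization. First I would set $\W_1=\W_2=\W_3=\O_{n,n}$ in (\ref{eq:def-extended-lsa}). Then $\H\W_l=\O_{m,n}$ for each $l$, so $\H\W_l+\B_l=\B_l$, and therefore
\begin{align}
\newlsa_{\vtheta}(\H)=\B_3\B_1^\top\B_2
\end{align}
independently of $\H$. Thus it suffices to find $m\times n$ bias matrices $\B_1,\B_2,\B_3$ with $\B_3\B_1^\top\B_2=\C$.

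To produce such a factorization I would place $\C$ into one of the three factors and choose the remaining two so that their product is an identity matrix of the appropriate size, treating the two orientations of the rectangle separately. If $m\ge n$, I would set $\B_3=\C$ and
\begin{align}
\B_1=\B_2=\begin{bmatrix}\I_n\\ \O_{m-n,n}\end{bmatrix},
\end{align}
so that $\B_1^\top\B_2=\I_n$ and hence $\B_3\B_1^\top\B_2=\C\I_n=\C$. If instead $m<n$, I would set $\B_2=\C$ and $\B_1=\B_3=[\I_m,\O_{m,n-m}]$, so that $\B_3\B_1^\top=\I_m$ and hence $\B_3\B_1^\top\B_2=\I_m\C=\C$. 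In either case the chosen $\B_l$ are $m\times n$, as required, and the resulting $\vtheta=\{(\O_{n,n},\B_l):l=1,2,3\}$ witnesses the claim.

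The only subtlety, and the step I would be most careful about, is the shape constraint in the factorization: the product of two $m\times n$ bias matrices (one transposed) is $n\times n$ in one grouping and $m\times m$ in the other, and it can equal the corresponding identity only when the inner dimension is large enough, that is, $\B_1^\top\B_2=\I_n$ forces $m\ge n$ while $\B_3\B_1^\top=\I_m$ forces $n\ge m$. Splitting on whether $m\ge n$ covers every case, so no genuine difficulty remains; the verification of each block-matrix product is then routine and I would not expand it in full.
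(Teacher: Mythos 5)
Your proposal is correct and follows essentially the same route as the paper's proof: zero out the weight matrices so the output collapses to $\B_3\B_1^\top\B_2$, then place $\C$ in one factor and make the other two multiply to an identity, splitting on the orientation of the rectangle. The only cosmetic differences are that you fold the $m=n$ case into $m\ge n$ (which works, since the stacked block degenerates to $\I_n$) and that your choice $\W_l=\O_{n,n}$ is the dimensionally consistent one, whereas the paper writes $\O_{m,m}$ there.
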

\begin{proof}
 If we set $\W_k=\O_{m,m}$ for $k=1,2,3$ then $\newlsa(\H)=\B_3\B_1^\top\B_2$.
In case of $m>n$, by setting
\begin{align}
\B_1=\B_2=\begin{bmatrix}
      \I_n\\
\O_{m-n,n}
     \end{bmatrix},
\end{align}
we obtain $\B_1^\top\B_2=\I_n$. By setting $\B_3=\C$, we have
$\newlsa_\vtheta(\H)=\C\I_n=\C$. In case of $m<n$, by setting
\begin{align}
\B_1=\B_3=\begin{bmatrix}
      \I_m & \O_{m,n-m}
     \end{bmatrix},
\end{align}
we obtain $\B_3\B_1^\top=\I_m$. By setting $\B_2=\C$, we have
$\newlsa_\vtheta(\H)=\I_m\C=\C$.  In case of $m=n$,
$\newlsa_\vtheta(\H)=\C$ holds, for example, by choosing $\B_1=\B_2=\I_m$ and
$\B_3=\C$.
\end{proof}
Therefore, for any input matrix, ELSA can output any constant matrix
whose size is the same as that of the input matrix. 

Next ,we consider an implementation for outputting the input matrix.  It
implies that ELSA can work as a skip connection.
\begin{lemma}
\label{lemma:elsa-skip}
For any $m\times n$ input matrix $\H$, there exists $\vtheta$, by which
$\newlsa_{\vtheta}(\H)=\H$ holds. 
\end{lemma}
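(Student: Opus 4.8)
The plan is to exploit the same zero-padding trick used in the proof of Property~\ref{lemma:elsa-const}, but to let the input $\H$ survive through one of the three factors rather than cancelling it entirely. The idea is to switch off two of the three weight matrices so that the corresponding factors reduce to pure bias matrices, and to choose those biases so that their product is an identity of the correct size; the remaining factor is then forced to equal $\H$ by taking its weight to be the identity and its bias to be zero.

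Concretely, when $m\ge n$ I would carry $\H$ through the leftmost factor and collapse the other two. Set $\W_3=\I_n$ and $\B_3=\O_{m,n}$, so that $\H\W_3+\B_3=\H$, and set $\W_1=\W_2=\O_{n,n}$, so that $(\H\W_1+\B_1)^\top(\H\W_2+\B_2)=\B_1^\top\B_2$. Choosing
\begin{align}
\B_1=\B_2=\begin{bmatrix}\I_n\\ \O_{m-n,n}\end{bmatrix}
\end{align}
gives $\B_1^\top\B_2=\I_n$, whence $\newlsa_\vtheta(\H)=\H\I_n=\H$. When $m\le n$ the same identity cannot be placed in the two bias factors on the right, so instead I would carry $\H$ through the rightmost factor. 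Set $\W_2=\I_n$, $\B_2=\O_{m,n}$, so $\H\W_2+\B_2=\H$, and set $\W_1=\W_3=\O_{n,n}$, so the left two factors reduce to $\B_3\B_1^\top$. Choosing
\begin{align}
\B_1=\B_3=\begin{bmatrix}\I_m & \O_{m,n-m}\end{bmatrix}
\end{align}
gives $\B_3\B_1^\top=\I_m$, whence $\newlsa_\vtheta(\H)=\I_m\H=\H$. For $m=n$ either construction works; e.g. $\W_1=\W_2=\O_{n,n}$, $\W_3=\I_n$, $\B_1=\B_2=\I_m$ and $\B_3=\O_{m,n}$ gives $\newlsa_\vtheta(\H)=\H\I_m=\H$.

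The only real obstacle is the rectangular case, where I must decide on which side of the product the input $\H$ should be preserved. Since $\B_1^\top\B_2=\I_n$ forces the $n\times m$ matrix $\B_1^\top$ to have full row rank and hence requires $m\ge n$, while $\B_3\B_1^\top=\I_m$ requires $n\ge m$, the split above is exactly the one that guarantees the two collapsed bias factors can realize an identity of the correct dimension; this mirrors the case analysis of Property~\ref{lemma:elsa-const}. Everything else is the routine verification that setting the appropriate weight matrices to $\O_{n,n}$ removes the $\H$-dependence from the two collapsed factors, leaving a single $\H$ multiplied by an identity.
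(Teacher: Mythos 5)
Your proposal is correct and follows essentially the same construction as the paper's proof: in the $m\ge n$ case you carry $\H$ through the third factor with $\W_3=\I_n$ while collapsing the first two factors to $\B_1^\top\B_2=\I_n$ via zero weights and stacked-identity biases, and in the $m\le n$ case you carry $\H$ through the second factor while collapsing $\B_3\B_1^\top=\I_m$, exactly as the paper does. Your added rank observation explaining why the case split is forced is a nice touch but does not change the argument.
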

\begin{proof}
In case of $m>n$, by setting 
\begin{align}
\B_1=\B_2=\begin{bmatrix}
      \I_n\\
\O_{m-n,n}
     \end{bmatrix},
\end{align}
we have $\B_1^\top\B_2=\I_n$. By setting 
$\W_1=\W_2=\O_{n,n}$, $\W_3=\I_n$ and
$\B_3=\O_{m,n}$, we have $\newlsa_\vtheta(\H)=\H\W_3\B_1^\top\B_2=\H\I_n\I_n=\H$. 
In case of $m<n$, by setting
\begin{align}
\B_1=\B_3=\begin{bmatrix}
      \I_m & \O_{m,n-m}
     \end{bmatrix},
\end{align}
we obtain $\B_3\B_1^\top=\I_m$. By setting $\W_1=\W_3=\O_{n,n}$,
$\W_2=\I_n$ and $\B_2=\O_{m,n}$, we have
$\newlsa_\vtheta(\H)=\B_3\B_1^\top\H\W_2=\I_m\H\I_n=\H$.  In case of
$m=n$, for example, by setting $\W_1=\W_2=\O_{m,m}$, $\W_3=\I_m$,
$\B_1=\B_2=\I_m$ and $\B_3=\O_{m,n}$, we have
$\newlsa_\vtheta(\H)=\H\W_3\B_1^\top\B_2=\H\I_m\I_m\I_m=\H$.
\end{proof}
Therefore, for any input matrix, ELSA can play a role of a skip connection.

We finally consider an implementation of a multiplication of two matrices.
\begin{lemma}
\label{lemma:elsa-mul} Let $\A$ and $\B$ are $r\times s$ and $s\times t$
matrices. Under an appropriate choice
of an $m\times n$ input matrix $\H$ that includes $\A^\top$ and $\B$ (or
$\A$ and $\B$) as submatrices, there exists $\vtheta$, by which
$\newlsa_{\vtheta}(\H)$ includes $\A\B$.
\end{lemma}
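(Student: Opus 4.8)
The plan is to exploit the fact that, in $\newlsa_\vtheta(\H)=(\H\W_3+\B_3)(\H\W_1+\B_1)^\top(\H\W_2+\B_2)$, the middle product is an inner-product-type expression that sums over the \emph{rows} of its two factors. Writing $F_1=\H\W_1+\B_1$ and $F_2=\H\W_2+\B_2$, one has $(F_1^\top F_2)[i,j]=\sum_k F_1[k,i]\,F_2[k,j]$. Hence, if I arrange $F_1$ to carry $\A^\top$ and $F_2$ to carry $\B$ in the \emph{same} block of $s$ rows and in aligned columns (with zeros elsewhere in those rows), this sum reproduces $\sum_{l=1}^{s}\A[i,l]\B[l,j]=(\A\B)[i,j]$, so $\A\B$ appears as a block of $F_1^\top F_2$. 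The remaining factor $\H\W_3+\B_3$ then only needs to copy that block into the output. The essential gain over the LSA construction of Section 2.3 is that this selector — and any identity block that LSA was forced to embed inside $\H$ — can now be supplied for free through a bias matrix, as guaranteed by Property \ref{lemma:elsa-const}.

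Concretely, for the case where $\H$ contains $\A^\top$ and $\B$, I would take an input of the form $\H=[\A^\top\ \B]$ (padded with zero rows and columns as needed), so that $\A^\top$ and $\B$ occupy the same $s$ rows in disjoint column ranges. Setting $\B_1=\B_2=\O_{m,n}$ and choosing $\W_1,\W_2$ as column mask-and-move matrices (acting as the right factor in Property \ref{lemma:mskmov}) — $\W_1$ moving the $\A^\top$-columns to the left and annihilating the rest, $\W_2$ doing the same for the $\B$-columns — gives $F_1=[\A^\top\ \O]$ and $F_2=[\B\ \O]$. A direct block computation then yields $F_1^\top F_2=\begin{bmatrix}\A\B & \O\\ \O & \O\end{bmatrix}$, with $\A\B$ in the top-left corner. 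Finally I would set $\W_3=\O_{n,n}$ and take $\B_3$ to be the constant selector $\begin{bmatrix}\I_r&\O\\ \O&\O\end{bmatrix}$ keeping that corner, so that $\newlsa_\vtheta(\H)$ contains $\A\B$; realizability of $F_3=\B_3$ as a constant is exactly Property \ref{lemma:elsa-const}.

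For the case where $\H$ contains $\A$ and $\B$ (rather than $\A^\top$), the transpose cannot be produced by the right-multiplications $\H\W_l$ alone, so I would instead let the \emph{outer} factor carry $\A$: choose $\W_3$ so that $\H\W_3=[\A\ \O]$ places $\A$ in rows $1$ to $r$, columns $1$ to $s$ (with $\B_3=\O_{m,n}$), and use the middle product to manufacture a clean copy of $\B$. For the latter, set $\W_1=\O_{n,n}$ and let $\B_1$ be the constant matrix whose transpose is the selector $\begin{bmatrix}\I_s&\O\\ \O&\O\end{bmatrix}$, while $\W_2$ extracts $\B$ into rows $1$ to $s$ (with $\B_2=\O_{m,n}$). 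Then $F_1^\top F_2$ is a zero-padded copy of $\B$, and left-multiplication by $[\A\ \O]$ yields $\A\B$. This is precisely the step where the bias is indispensable, since in plain LSA there is no way to inject the constant selector without embedding an identity block in $\H$.

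The routine part is the block bookkeeping; the one point that needs genuine care is ensuring that the columns carrying $\A^\top$ (resp.\ $\B$) are zero outside the designated $s$ rows and that no cross terms leak into the $\A\B$ block of $F_1^\top F_2$. This is controlled entirely by the design of $\H$ and by invoking Property \ref{lemma:mskmov} to guarantee that the chosen $\W_1,\W_2,\W_3$ produce exactly the zero-padded blocks claimed. Once that is in place, every displayed identity is a mechanical multiplication of block matrices, so I expect no substantive obstacle beyond arranging the zero-padding so that the selectors have admissible sizes (e.g.\ padding $\H$ with zero rows to ensure $m\ge r$).
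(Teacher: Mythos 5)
Your proposal is correct and follows essentially the same route as the paper's proof: your first construction (shared rows for $\A^\top$ and $\B$, mask-and-move weights $\W_1,\W_2$, and a constant selector $\B_3$ with $\W_3=\O_{n,n}$) is the paper's first case written as $(\H\W_1)^\top(\H\W_2)$ instead of $\W_1^\top\H^\top\H\W_2$, and your second construction ($\H\W_3$ carrying $\A$, $\W_1=\O_{n,n}$ with the constant $\B_1$ as selector, $\W_2$ extracting $\B$) is exactly the paper's second case up to block placement. The only caveat is cosmetic: right-multiplication by $\W_2$ cannot move $\B$ into different rows, so your phrase ``extracts $\B$ into rows $1$ to $s$'' must be read as either placing $\B$ in those rows of $\H$ from the start or letting $\B_1^\top$ perform the row move, which is precisely how the paper arranges it.
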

\begin{proof}
Since $\A\B$ is an $r\times t$ matrix, a size of output should be larger
than $r\times t$.  Since the size of output is the same as that of $\H$,
the size of input should be larger than $r\times t$. 
As a simple case, we choose an $m\times n$ input matrix that is given by
\begin{align}
\H:=\begin{bmatrix}
    \A^\top & \B\\
\O_{r,r} & \O_{r,t}
   \end{bmatrix},
\end{align}
where $m:=s+r$ and $n=:r+t$. We have
\begin{align}
\H^\top\H=
\begin{bmatrix}
\A\A^\top & \A\B\\
\B^\top\A^\top & \B^\top\B
\end{bmatrix}.
\end{align}
By choosing $\W_1$ and $\W_2$ according to
Property \ref{lemma:mskmov}, we have
\begin{align}
\W_1^\top\H^\top\H\W_2=
\begin{bmatrix}
\O_{r,r} & \A\B\\
\O_{t,r} & \O_{t,t}
   \end{bmatrix}. 
\end{align}
We then set $\W_3=\O_{m,m}$ and 
\begin{align}
\B_3=\begin{bmatrix}
      \I_r & \O_{r,t } \\
\O_{s,r} &\O_{s,t}
     \end{bmatrix}. 
\end{align}
Then, output matrix is obtained by
\begin{align}
\newlsa_\vtheta(\H)=
\begin{bmatrix}
\O_{r,r} & \A\B\\
\O_{s,r} & \O_{s,t}
   \end{bmatrix}. 
\end{align}

On the other hand, we consider a case that an $(r+s)\times(s+t)$ input
matrix is given by
\begin{align}
\H:=\begin{bmatrix}
    \A & \O_{r,t}\\
\O_{s,s} & \B
   \end{bmatrix}.
\end{align}
By setting $\B_3=\B_2=\O_{m,n}$ and
\begin{align}
\W_3=\begin{bmatrix}
\O_{s,t} & \I_s\\
\O_{t,t} & \O_{t,s}
\end{bmatrix},~
\W_2=\begin{bmatrix}
\O_{s,s} & \O_{s,t}\\
\O_{t,s} & \I_t
\end{bmatrix},
\end{align}
we have 
\begin{align}
\H\W_3=\begin{bmatrix}
\O_{r,t} & \A\\
\O_{s,t} & \O_{s,s}
\end{bmatrix},~
\H\W_2=\begin{bmatrix}
\O_{r,s} & \O_{r,t}\\
\O_{s,s} & \B
\end{bmatrix}.
\end{align}
By setting $\W_1=\O_{s+t,s+t}$ and 
\begin{align}
\B_1=\begin{bmatrix}
\O_{r,t} & \O_{r,s}\\
\O_{s,t} & \I_s
\end{bmatrix},
\end{align}
we obtain
\begin{align}
\newlsa_\vtheta(\H)=(\H\W_3)\B_1^\top(\H\W_2)=
\begin{bmatrix}
\O_{r,s} & \A\B\\
\O_{s,s} & \O_{s,t}
   \end{bmatrix}. 
\end{align}
\end{proof}

\section{Implementation of ridge regression}

\subsection{Ridge regression}

Let $\{(\x_i,y_i):i=1,\ldots,n\}$ be $n$ pairs of input-output examples,
where $\x_i=(x_{i,1},\ldots,x_{i,d})^\top\in\R^d$ and
$y_i\in\R$.  If necessary, we set $x_{i,1}=1$ for a constant term and
the number explanatory variables is $d-1$ in this case.
We also have a new input denoted by
$\u=(u_1,\ldots,u_d)^\top$ and expect to predict the
corresponding output by applying linear regression. Let $\X$ be an
$n\times d$ matrix whose $(i,j)$ entry is $x_{i,j}$. Therefore, the row
vector of $\X$ is $\x_i^\top$. In linear regression, a model output for an input
$\x=(x_1,\ldots,x_d)^\top$ is given by
\begin{align}
\label{eq:regressor}
f_{\w}(\x):=\w^\top\x=\sum_{k=1}^dw_kx_k,
\end{align}
where $\w=(w_1,\ldots,w_d)^\top\in\R^d$ is a coefficient vector.  We
define $\y=(y_1,\ldots,y_n)^\top$.  The ridge estimator with a ridge
parameter $\lambda\ge 0$ is given by
\begin{align}
\label{eq:ridge-estimator}
\evw_{\lambda}=(\X^\top\X+\lambda\I_d)^{-1}\X^\top\y,
\end{align}
which can be viewed as a solution of a system of linear equations with
the form of $\F\w_{\lambda}=\b$, where $\F=\X^\top\X+\lambda\I_d$ and
$\b=\X^\top\y$. Note that the ridge estimator is the minimizer of the
$\ell_2$ regularized cost function defined by
\begin{align}
\label{eq:l2-reg-cost}
\ell(\w)=\frac{1}{2}\|\y-\X\w\|^2+\frac{\lambda}{2}\|\w\|^2. 
\end{align}
If $\lambda=0$ and $\X^\top\X$ is non-singular, $\evw_0$ is the least
squares estimator.  In considering the least squares
estimation, we always assume that $d\le n$ and the rank of $\X$ is
$d$.

After the estimation, a predicted output for $\u$ is given by
$\u^\top\evw_{\lambda}$.  Our purpose is to construct a layered network that outputs
$\u^\top\evw_{\lambda}$ for any given $(\X,\y,\lambda,\u)$. 

\subsection{Implementation of a closed-form solution}

A solver for a system of linear equations is required to obtain a
closed-form solution for ridge regression. In the appendix, we
have implemented Gaussian elimination for solving systems of linear
equations using standard components which are a ReLU
network with one hidden layer, matrix multiplication and skip
connections. Unfortunately, this implementation consists of a combination
of these components while it is not represented by a layered structure.

\cite{Akyurek2022} has implemented a closed-form of ridge estimation
using a transformer. The important point in \cite{Akyurek2022} is that
several computational primitives are explicitly given and the solver of
ridge regression is expressed as a processing or operation using the
primitives.  In other words, it implements an algorithm that outputs a
ridge estimate by using the computational primitives. In the
implementation of the solver of ridge regression, we need multiplication
and division operations. In \cite{Akyurek2022}, multiplication is
realized by using the GeLU (Gaussian Error Linear Unit) nonlinearity
based on Taylor's expansion; see also \cite{Liang2017}. In the appendix,
we implement it by matrix multiplication. On the other hand, a
division operation is essential since we need the matrix inverse to
obtain a ridge estimate.  In \cite{Akyurek2022}, the Sherman–Morrison
formula is used to reduce the matrix inversion to a sequence of rank-one
updates performed example-by-example, and then the division operation is
implemented by a ``LayerNorm'' function. In the appendix, we implement
it approximately by a ReLU network with one hidden layer.

When considering the implementation of an algorithm on a layered
structure, a sequential connection of simple modules with the same
structure is preferable. In this sense, iteration-based algorithms are
preferable to specific algorithms such as Gaussian elimination; e.g.
see \cite{Bai2023}. Moreover, in the case of ridge regression, the
gradient descent update does not require nonlinearity, but only requires
matrix multiplication and addition; e.g.
\cite{Mahankali2023,Oswald2022}.  Here we consider LSA and ELSA
implementations of a batch-type gradient descent algorithm for ridge regression.

\subsection{Gradient descent}

We consider to obtain a solution to ridge regression by a batch-type
gradient descent algorithm.  Since we have
\begin{align}
\frac{\partial \ell(\w)}{\partial\w}=-\X^\top(\y-\X\w)+\lambda\w
\end{align}
by (\ref{eq:l2-reg-cost}), the batch update at an iteration $t$ is given by
\begin{align}
\label{eq:ridge-gd}
\w_t&=\w_{t-1}-\eta\Delta\w_{t-1}\\
\label{eq:ridge-gd-delta}
\Delta\w_{t-1}&=-\X^\top\y+\X^\top\X\w_{t-1}+\lambda\w_{t-1}, 
\end{align}
where $\eta>0$ is a learning rate. Let $T$ be the number of iterations.
Therefore, $t=1,\ldots,T$ and $\w_0$ is an initial coefficient vector.

In the gradient descent algorithm, (\ref{eq:ridge-gd}) only requires
matrix multiplication and addition, by which a coefficient vector
obtained after enough iterations is an approximation of ridge
solution. Note that this is viewed as approximating the matrix inversion
calculation by an infinite iteration of a linear calculation. If we can
construct a module that performs (\ref{eq:ridge-gd}), then the gradient
descent algorithm is implemented by a layered network structure with
stacking of the module.

Let $\H_0$ be an input matrix which has a certain form including
information on $\X$, $\y$, $\u$, $\w_0$, $\lambda$ and $\eta$.  We then
consider to construct a layered network which receives $\H_0$ and output
$\u^\top\w_T$, in which $\w_T$ is a coefficient vector after $T$ iterations
of (\ref{eq:ridge-gd}).  To do this, we construct a module $\module_t$
that receives $\H_{t-1}$ and output $\H_t$; i.e.
$\H_t=\module_t(\H_{t-1})$. And, at a final output, $\H_{T+1}$ includes
$\u^\top\w_T$.

\subsection{Implementation under a designed input form}

We show an example of the implementation of a batch-type gradient
descent algorithm for ridge regression by using a naive LSA; e.g. see
also \cite{Akyurek2022,Bai2023}.

An input matrix form at the $t$-th step is assumed to be
\begin{align}
\label{eq:designed-input}
\H_{t-1}:=\begin{bmatrix}
    \sqrt{\eta}\X^\top & \O_{d,n} & \O_{d,1} & \sqrt{\eta}\sqrt{\lambda}\I_d & 
\u & \w_{t-1}\\
\O_{1,n} & \sqrt{\eta}\y^\top & 1 & \O_{1,d} & 0 & 0
   \end{bmatrix},
\end{align}
which is an $(d+1)\times s$ matrix, where we define $s:=2n+d+3$.  In
$\H_t$, $\w_{t-1}$ is an updated coefficient vector at the $(t-1)$-th
step and members other than $\w_{t-1}$ are fixed for any $t$.

We employ a module $\module_t$ that consists of a multi-head LSA and a
skip connection. Note that we do not employ any nonlinearity. For a
$(d+1)\times s$ input matrix $\H_{t-1}$, we define
\begin{align}
\label{eq:mod-input-design}
\module_t(\H_{t-1})&:=\P_t+\H_{t-1}\\
\label{eq:mhlsa-input-design}
\P_t=\P_t(\H_{t-1})&:=\sum_{k=1}^{\ok}\lsa_{\vtheta_{t,k}}(\H_{t-1})
\end{align}
where $\ok$ is the number of heads and $\lsa_{\vtheta_{t,k}}$ is defined
in (\ref{eq:def-lsa}), in which
$\vtheta_{t,k}=\{\W_{t,k,1},\W_{t,k,2},\W_{t,k,3}\}$. $\W_{t,k,l}$,
$l=1,2,3$ are $s\times s$ weight matrices. $\P_t$ is an output of
multi-head LSA and is an $(d+1)\times s$ matrix. In our implementation,
the weight matrices do not depend on $t$ for $t=1,\ldots,T$; i.e. those
are common for every gradient descent steps. Therefore, we write
$\{\W_{k,1},\W_{k,2},\W_{k,3}\}=\{\W_{t,k,1},\W_{t,k,2},\W_{t,k,3}\}$
for $t=1,\ldots,T$. We set $\ok=3$ below.

Now, we have
\begin{align}
\label{eq:Htop-H-input-design}
&\H_{t-1}^\top\H_{t-1}\notag\\
&=
\begin{bmatrix}
\eta\X\X^\top & \O_{n,n} & \O_{n,1} & \eta\sqrt{\lambda}\X^\top & 
\sqrt{\eta}\X\u & \sqrt{\eta}\X\w_{t-1}\\
\O_{n,n} & \eta\y\y^\top & \sqrt{\eta}\y & \O_{n,d} & \O_{n,1} & \O_{n,1}\\
\O_{1,n} & \sqrt{\eta}\y^\top & 1 & \O_{1,d} & 0 & 0\\
\eta\sqrt{\lambda}\X^\top & \O_{d,n} & \O_{d,1} & \eta\lambda\I_d & 
\sqrt{\eta}\sqrt{\lambda}\u & \sqrt{\eta}\sqrt{\lambda}\w_{t-1}\\
\sqrt{\eta}\u^\top\X^\top & \O_{d,n} & 0 & \sqrt{\eta}\sqrt{\lambda}\u^\top 
& \u^\top\u & \u^\top\w_{t-1}\\
\sqrt{\eta}\w^\top\X^\top & \O_{d,n} & 0 & \sqrt{\eta}\sqrt{\lambda}\w_{t-1}^\top & 
\w_{t-1}^\top\u & \w_{t-1}^\top\w_{t-1}\\
\end{bmatrix}, 
\end{align}
which is a $s\times s$ matrix. Since we have
\begin{align}
(\H_{t-1}\W_{k,1})^\top\H_{t-1}\W_{k,2}
=\W_{k,1}^\top(\H_{t-1}^\top\H_{t-1})\W_{k,2}
\end{align}
for $1\le k\le \ok$, 
by choosing $\W_{k,1}$ and $\W_{k,2}$ 
for (\ref{eq:Htop-H-input-design}) at
each $k$ according to Property \ref{lemma:mskmov}, we have
\begin{align}
\label{eq:lsa-maskmove-1-1}
(\H_{t-1}\W_{1,1})^\top\H_{t-1}\W_{1,2}
&=\begin{bmatrix}
   \O_{n,s-1} & \sqrt{\eta}\y\\
   \O_{s-n,s-1} & \O_{s-n,1}
  \end{bmatrix}\\
(\H_{t-1}\W_{2,1})^\top\H_{t-1}\W_{2,2}
&=\begin{bmatrix}
   \O_{n,s-1} & \sqrt{\eta}\X\w_{t-1}\\
   \O_{s-n,s-1} & \O_{s-n,1}
  \end{bmatrix}\\
(\H_{t-1}\W_{3,1})^\top\H_{t-1}\W_{3,2}
&=\begin{bmatrix}
   \O_{d,s-1} & \sqrt{\eta}\sqrt{\lambda}\w_{t-1}\\
   \O_{s-d,s-1} & \O_{s-d,1}
  \end{bmatrix}.
\end{align}
For example, in (\ref{eq:lsa-maskmove-1-1}), we can extract $\sqrt{\eta}\y$
 by using the mask and move operation in LSA since it is included in 
 $\H_{t-1}^\top\H_{t-1}$. The same computations appear in below.
By setting 
\begin{align}
\W_{1,3}&=\begin{bmatrix}
    \I_n & \O_{n,s-n}\\
\O_{s-n,n} & \O_{s-n,s-n}
   \end{bmatrix},~
\W_{3,3}=
\begin{bmatrix}
\O_{s-d-2,d} & \O_{s-d-2,s-d}\\
   -\I_d & \O_{d,s-d}\\
\O_{2,d} & \O_{2,s-d}
   \end{bmatrix}
\end{align}
and $\W_{2,3}=-\W_{1,3}$, we have
\begin{align}
\H_{t-1}\W_{1,3}&=
\begin{bmatrix}
    \sqrt{\eta}\X^\top & \O_{d,s-n}\\
\O_{1,n} & \O_{1,s-n}
   \end{bmatrix}\\
\H_{t-1}\W_{2,3}&=\begin{bmatrix}
    -\sqrt{\eta}\X^\top & \O_{d,s-n}\\
\O_{1,n} & \O_{1,s-n}
   \end{bmatrix}\\
\H_{t-1}\W_{3,3}&=\begin{bmatrix}
   - \sqrt{\eta}\sqrt{\lambda}\I_d & \O_{d,s-d}\\
\O_{1,d} & \O_{1,s-d}
   \end{bmatrix}.
\end{align}
Therefore, we obtain
\begin{align}
\P_t=
 \sum_{k=1}^3(\H_{t-1}\W_{k,3})
(\H_{t-1}^\top\W_{k,1}^\top)\H_{t-1}\W_{k,2}
=
\begin{bmatrix}
   \O_{d,s-1} & -\eta\Delta\w_{t-1}\\
   \O_{1,s-1} & \O_{s-n,1}
  \end{bmatrix},
\end{align}
where  $\Delta\w_{t-1}$ is defined in (\ref{eq:ridge-gd}).
By (\ref{eq:mod-input-design}), $\H_t:=\module_t(\H_{t-1})$ is obtained by
\begin{align}
\H_t=\P_t+\H_{t-1}=
\begin{bmatrix}
    \sqrt{\eta}\X^\top & \O_{d,n} & \O_{d,1} & \sqrt{\eta}\sqrt{\lambda}\I_d & 
\u & \w_t\\
\O_{1,n} & \sqrt{\eta}\y^\top & 1 & \O_{1,d} & 0 & 0
   \end{bmatrix},
\end{align}
where $\w_t$ is the updated coefficient vector in (\ref{eq:ridge-gd-delta}).
By applying $\module_t$ sequentially for $t=1,\ldots,T$ under a certain choice
of $\w_0$, we obtain $\H_T=\module_T(\H_{T-1})$ which includes an
estimate of a coefficient vector $\w_T$.  

We finally insert $\u^\top\w_T$ into the output of $\module_{T+1}$.  By
choosing $\W_{T+1,1,1}$ and $\W_{T+1,1,2}$ 
for (\ref{eq:Htop-H-input-design}) according to Property
\ref{lemma:mskmov}, we have
\begin{align}
(\H_T\W_{T+1,1,1})^\top\H_T\W_{T+1,1,2}
&=\begin{bmatrix}
   \O_{s-1,s-1} & \O_{s-1,1}\\
   \O_{1,s-1} & \u^\top\w_T
  \end{bmatrix}.
\end{align}
If we set
\begin{align}
\W_{T+1,1,3}=\begin{bmatrix}
      \O_{2n,s-1} & \O_{2n,1}\\
\O_{1,s-1} & 1\\
\O_{d+2,s-1} & \O_{d+2,1}
     \end{bmatrix} 
\end{align}
then
\begin{align}
\H_T\W_{T+1,1,3}
&=\begin{bmatrix}
   \O_{d,s-1} & \O_{d,1}\\
   \O_{1,s-1} & 1
  \end{bmatrix}.
\end{align}
Therefore, by choosing $\W_{T+1,k,3}=\O_{s,s}$ for $k=2,3$, we obtain
\begin{align}
\P_{T+1}
&=(\H_T\W_{T+1,1,3})(\H_T\W_{T+1,1,1})^\top\H_T\W_{T+1,1,2}\notag\\
&=\begin{bmatrix}
   \O_{d,s-1} & \O_{d,1}\\
   \O_{1,s-1} & \u^\top\w_T
  \end{bmatrix}.
\end{align} 
As a result, by (\ref{eq:mod-input-design}),
$\H_{T+1}:=\module_{T+1}(\H_T)$ is obtained by
\begin{align}
\H_{T+1}&=\P_{T+1}+\H_T\notag\\
&=
\begin{bmatrix}
    \sqrt{\eta}\X^\top & \O_{d,n} & \O_{d,1} & \sqrt{\eta}\sqrt{\lambda}\I_d & 
\u & \w_T\\
\O_{1,n} & \sqrt{\eta}\y^\top & 1 & \O_{1,d} & 0 & \u^\top\w_T
   \end{bmatrix},
\end{align}
in which the right bottom element is a solution that is a predicted output
for $\u$ after $T$ steps.

\subsection{Implementation by extended linear self-attention}

Since ELSA without bias matrix terms is LSA, the previous implementation
under a specific input form is also valid for ELSA.

We here consider the other input form, in which variables are 
enumerated and more generally compared to the previous implementation.
We define $d\times s$ input matrix by
\begin{align}
\label{eq:enumerated-input}
\H_{t-1}=\begin{bmatrix}
\X^\top & \Y_0^\top & \lambda\I_d & \sqrt{\eta}\I_d & \u & \O_{d,1} & \w_{t-1}
   \end{bmatrix},
\end{align}
where $\Y_0:=[\O_{n,d-1}~~\y]$ and $s:=2n+2d+3$.  In $\H_{t-1}$,
$\O_{d,1}$ is used for storing a prediction result. Indeed, the mask and
move operation can extract relationships between variables in this form
and it is a part of ELSA.  In the previous implementation using LSA, in
addition to a specific form, we need to set $\sqrt{\eta}$ as a scaling
factor for some matrices. This may come from a restriction of LSA with
respect to matrix multiplication.  This arrangement is relaxed in
(\ref{eq:enumerated-input}), in which what we need are independently
included.

We employ a module $\module_t$ that consists of a sequential connection
of two multi-head ELSAs and a skip connection. We do not employ any
nonlinearity again.  For a $d\times s$ input matrix $\H_{t-1}$, we
define
\begin{align}
\label{eq:mod-elsa}
\module_t(\H_{t-1})&:=\P_{(2),t}+\H_{t-1}\\
\P_{(2),t}&:=
\sum_{k=1}^{\ok}\newlsa_{\vtheta_{(2),t,k}}(\P_{(1),t}),\\
\P_{(1),t}&:=
\sum_{k=1}^{\ok}\newlsa_{\vtheta_{(1),t,k}}(\H_{t-1}),
\end{align}
where $\ok$ is the number of heads and $\newlsa_{\vtheta_{(j),t,k}}$ for
$j=1,2$ are defined in (\ref{eq:def-extended-lsa}), in which
$\vtheta_{(j),t,k}=\{(\W_{(j),t,k,l},\B_{(j),t,k,l}):l=1,2,3\}$. $\W_{(j),t,k,l}$
is a $s\times s$ weight matrix and $\B_{(j),t,k,l}$ is a $d\times s$ bias matrix.
In our implementation, again, the weight matrices do not depend on $t$
for $t=1,\ldots,T$; i.e. those are common for every steps. Therefore,
we write $\vtheta_{(j),t,k}=\vtheta_{(j),k}$ and
$(\W_{(j),k,l},\B_{(j),k,l})=(\W_{(j),t,k,l},\B_{(j),t,k,l})$ for any
$t=1,\ldots,T$. We set $\ok=4$ below.

We have
\begin{align}
\label{eq:Htop-H-elsa}
&\H_{t-1}^\top\H_{t-1}\notag\\
&=
\begin{bmatrix}
\X\X^\top & \sqrt{\eta}\X\Y_0^\top & \lambda\X &  \sqrt{\eta}\X & 
\X\u & \O_{n,1} & \X\w_{t-1}\\
\Y_0\X^\top & \Y_0\Y_0^\top & \lambda\Y_0 & \sqrt{\eta}\Y_0 & 
\Y_0\u & \O_{n,1} & \Y_0\w_{t-1}\\
\lambda\X^\top & \lambda\Y_0^\top & \lambda^2\I_d & 
\lambda\sqrt{\eta}\I_d & \lambda\u & \O_{d,1} &\lambda\w_{t-1}\\
\sqrt{\eta}\X^\top & \sqrt{\eta}\Y_0^\top & \sqrt{\eta}\lambda\I_d & \eta\I_d & 
\sqrt{\eta}\u & \O_{d,1} & \sqrt{\eta}\w_{t-1}\\
\u^\top\X^\top & \u^\top\Y_0^\top & \lambda\u^\top &
\sqrt{\eta}\u^\top & \u^\top\u & 0 & \u^\top\w_{t-1}\\
\O_{1,n} & \O_{1,n} & \O_{1,d} &
\O_{1,d} & 0 & 0 & 0\\
\w_{t-1}^\top\X^\top & \w_{t-1}^\top\Y_0^\top & \lambda\w_{t-1}^\top &
\sqrt{\eta}\w_{t-1}^\top & \w_{t-1}^\top\u & 0 & \w_{t-1}^\top\w_{t-1}
\end{bmatrix}.
\end{align}
We construct a module that computes $\w_t$ from $\w_{t-1}$. It
consists of two sequential ELSA blocks whose
parameters are
$\{\vtheta_{(1),k}:k=1,2,3,4\}$ for
the first block and $\{\vtheta_{(2),k}:k=1,2,3,4\}$ for the second block.

We show a design for the first block.
\begin{itemize}
\item For $k=1$, we set $\B_{(1),1,l}=\O_{d,s}$ for $l=1,2,3$. By
      choosing $\W_{(1),1,1}$ and $\W_{(1),1,2}$
for (\ref{eq:Htop-H-elsa}) according to Property \ref{lemma:mskmov}, we
      have
\begin{align}
(\H_{t-1}\W_{(1),1,1})^\top\H_{t-1}\W_{(1),1,2}
=\begin{bmatrix}
\O_{n,s-1} & \X\w_{t-1}\\
\O_{s-n,s-1} & \O_{s-n,1}
               \end{bmatrix}. 
\end{align}
By setting
\begin{align}
\W_{(1),1,3}=\begin{bmatrix}
\I_n & \O_{n,s-n}\\
\O_{s-n,n} & \O_{s-n,s-n}
         \end{bmatrix},
\end{align}
we have
\begin{align}
\H_{t-1}\W_{(1),1,3}=\begin{bmatrix}
         \X^\top & \O_{d,s-n}
         \end{bmatrix}.
\end{align}
Therefore, we obtain
\begin{align}
\newlsa_{\vtheta_{(1),1}}(\H_{t-1})
=\begin{bmatrix}
\O_{d,s-1} & \X^\top\X\w_{t-1}
               \end{bmatrix}. 
\end{align}

\item For $k=2$, by setting $\B_{(1),2,1}=\B_{(1),2,2}=\O_{d,s}$ and 
choosing $\W_{(1),2,1}$ and $\W_{(1),2,2}$ for (\ref{eq:Htop-H-elsa})
according to Property \ref{lemma:mskmov}, we have
\begin{align}
(\H_{t-1}\W_{(1),2,1})^\top\H_{t-1}\W_{(1),2,2}
=\begin{bmatrix}
\O_{d,s-1} & \lambda\w_{t-1}\\
\O_{s-d,s-1} & \O_{s-d,1}
               \end{bmatrix}. 
\end{align}
By setting $\W_{(1),2,3}=\O_{s,s}$ and 
\begin{align}
\B_{(1),2,3}=
\begin{bmatrix}
\I_d & \O_{d,s-d}
\end{bmatrix},
\end{align}
we have
\begin{align}
\newlsa_{\vtheta_{(1),2}}(\H_{t-1})
=\begin{bmatrix}
\O_{d,s-1} & \lambda\w_{t-1}
 \end{bmatrix}.
\end{align}

\item For $k=3$, we set $\B_{(1),3,1}=\B_{(1),3,3}=\O_{d,s}$.
We also set 
\begin{align}
\W_{(1),3,3}=\begin{bmatrix}
       \O_{n,s-n} &  \I_n \\
\O_{s-n,s-n} & \O_{s-n,n}
         \end{bmatrix},~
\W_{(1),3,1}=\begin{bmatrix}
\O_{n,s-n} & \O_{n,n}\\
\O_{n,s-n} &  \I_n\\
\O_{s-2n,s-n} &  \O_{s-2n,n}
         \end{bmatrix}.
\end{align}
We then have
\begin{align}
 \H_{t-1}\W_{(1),3,3}=\begin{bmatrix}
         \O_{d,s-n} & \X^\top
        \end{bmatrix},~
 \H_{t-1}\W_{(1),3,1}=\begin{bmatrix}
         \O_{d,s-n} & \Y_0^\top
        \end{bmatrix}.
\end{align}
By setting $\W_{(1),3,2}=\O_{s,s}$ and
\begin{align}
\B_{(1),3,2}=\begin{bmatrix}
          \O_{d,s-d} & -\I_d
         \end{bmatrix}, 
\end{align}
we obtain
\begin{align}
\newlsa_{\vtheta_{(1),3}}(\H_{t-1})=
(\X^\top\Y_0)\B_{(1),3,2}=
\begin{bmatrix}
                \O_{d,s-1} & -\X^\top\y
               \end{bmatrix}
\end{align}
by the definition of $\Y_0$.

\item For $k=4$, we set $\B_{(1),4,1}=\B_{(1),4,2}=\O_{d,s}$.
By choosing $\W_{(1),4,1}$ and $\W_{(1),4,2}$ for (\ref{eq:Htop-H-elsa})
according to Property \ref{lemma:mskmov}, 
we have
\begin{align}
(\H_{t-1}\W_{(1),4,1})^\top\H_{t-1}\W_{(1),4,2}=\begin{bmatrix}
          \O_{d,2n+d} & \eta\I_d & \O_{d,3}\\
          \O_{s-d,2n+d} & \O_{s-d,d} & \O_{s-d,3}\\
         \end{bmatrix} .
\end{align}
By setting $\W_{(1),4,3}=\O_{s,s}$ and $\B_{(1),4,3}=[-\I_d~~\O_{d,s-d}]$, we have
\begin{align}
\newlsa_{\vtheta_{(1),4}}(\H_{t-1})=
\begin{bmatrix}
\O_{d,2n+d} & -\eta\I_d & \O_{d,3}
\end{bmatrix}. 
\end{align}
\end{itemize}

After all, we have
\begin{align}
\P_{(1),t}:=\sum_{k=1}^4\newlsa_{\vtheta_{(1),k}}(\H_{t-1})
=\begin{bmatrix}
\O_{d,2n+d} & -\eta\I_d & \O_{d,2} & \Delta\w_{t-1}.
\end{bmatrix} 
\end{align}
for the first block, in which $\Delta\w_{t-1}$ is defined in
(\ref{eq:ridge-gd}).

We show a construction of the second block whose input is
$\P_{(1),t}$. For $k=1$, we set $\B_{(2),1,1}=\B_{(2),1,2}=\O_{d,s}$.
By choosing $\W_{(2),1,1}$ and $\W_{(2),1,2}$ for (\ref{eq:Htop-H-elsa})
according to Property \ref{lemma:mskmov}, we have
\begin{align}
(\P_{(1),t}\W_{(2),1,1})^\top\P_{(1),t}\W_{(2),1,2}=\begin{bmatrix}
          \O_{d,s-1} & -\eta\Delta\w_{t-1}\\
          \O_{s-d,s-1} & \O_{s-d,1}\\
         \end{bmatrix} .
\end{align}
By setting $\W_{(2),1,3}=\O_{s,s}$ and
$\B_{(2),1,3}=[\I_d~~\O_{d,s-d}]$, we have
\begin{align}
\newlsa_{\vtheta_{(2),1}}(\P_{(1),t})
=\begin{bmatrix}
                    \O_{d,s-1} & -\eta\Delta\w_{t-1}
                   \end{bmatrix}.
\end{align}
By setting $\W_{(2),k,l}=\O_{s,s}$ and 
$\B_{(2),k,l}=\O_{d,s}$ for $k=2,3,4$ and $l=1,2,3$, we have
\begin{align}
\P_{(2),t}=\begin{bmatrix}
                    \O_{d,s-1} & -\eta\Delta\w_{t-1}
                   \end{bmatrix}.
\end{align}

As a result, $\H_t:=\module_t(\H_{t-1})$ is obtained by
\begin{align}
\H_t=\P_{(2),t}+\H_{t-1}
=\begin{bmatrix}
\X^\top & \Y_0^\top & \lambda\I_d & \sqrt{\eta}\I_d & \u & \O_{d,1} & \w_t
   \end{bmatrix},
\end{align}
where $\w_t$ is (\ref{eq:ridge-gd}) as desired.

By successively applying $\module_t(\H_{t-1})$ for $t=1,\ldots,T$, we
have 
\begin{align}
\H_T:=\module_t(\H_{T-1})
=\begin{bmatrix}
\X^\top & \Y_0^\top & \lambda\I_d & \sqrt{\eta}\I_d & \u & \O_{d,1} & \w_T
   \end{bmatrix}.
\end{align}

We then construct the output module by $\module_{T+1}$.
We write $(\W_{(j),k,l},\B_{(j),k,l})=(\W_{(j),T+1,k,l},\B_{(j),T+1,k,l})$ below.
For $k=1$, we set
$\B_{(1),1,1}=\B_{(1),1,2}=\O_{d,s}$.  By choosing $\W_{(1),1,1}$ and
$\W_{(1),1,2}$ for (\ref{eq:Htop-H-elsa}) 
according to Property \ref{lemma:mskmov}, we have
\begin{align}
\W_{(1),1,1}^\top\H_T^\top\H_T\W_{(1),1,2}=\begin{bmatrix}
          \O_{1,s-2} & \u^\top\w_T & 0\\
          \O_{s-1,s-2} & \O_{s-1,1} & \O_{s-1,1}\\
         \end{bmatrix} .
\end{align}
By setting $\W_{(1),1,3}=\O_{s,s}$ and 
\begin{align}
\B_{(1),1,3}=\begin{bmatrix}
1 & \O_{1,s-1}\\
\O_{d-1,1} & \O_{d-1,s-1}
\end{bmatrix},
\end{align}
we have 
\begin{align}
\newlsa_{\vtheta_{(1),1}}(\H_T)
=\begin{bmatrix}
         \O_{1,s-2} & \u^\top\w_T & 0\\
          \O_{d-1,s-2} & \O_{d-1,1} & \O_{d-1,1}\\
 \end{bmatrix}.
\end{align}
By setting $\W_{(1),k,l}=\O_{s,s}$ and $\B_{(1),k,l}=\O_{d,s}$ for
$k=2,3,4$ and $l=1,2,3$, we have
\begin{align}
\P_{(1),T+1}=\begin{bmatrix}
         \O_{1,s-2} & \u^\top\w_T & 0\\
          \O_{d-1,s-2} & \O_{d-1,1} & \O_{d-1,1}\\
 \end{bmatrix}.
\end{align}
By choosing $\vtheta_{(2),T+1,1}$ according to Property \ref{lemma:elsa-skip},
we have
\begin{align}
\newlsa_{\vtheta_{(2),T+1,1}}(\P_{(1),T+1})=\P_{(1),T+1}.
\end{align}
By setting $\W_{(2),T+1,k,l}=\O_{s,s}$ and $\B_{(2),T+1,k,l}=\O_{d,s}$ for
$k=2,3,4$ and $l=1,2,3$, we have
\begin{align}
\P_{(2),T+1}=\P_{(1),T+1}=
\begin{bmatrix}
         \O_{1,s-2} & \u^\top\w_T & 0\\
          \O_{d-1,s-2} & \O_{d-1,1} & \O_{d-1,1}\\
 \end{bmatrix}.
\end{align}
We then obtain
\begin{align}
\H_{T+1}&:=\P_{(2),T+1}+\H_T\notag\\
&=\begin{bmatrix}
\X^\top & \Y_0^\top & \lambda\I_d & \sqrt{\eta}\I_d & \u & \z & \w_T
   \end{bmatrix},
\end{align}
where $\z=[\u^\top\w_T~~\O_{1,d-1}]^\top$ which includes a model
prediction for $\u$ after $T$ steps.

We have several remarks.
\begin{itemize}

\item Although some components are obviously redundant, it is necessary
      for having a common structure.

 \item A module defined by (\ref{eq:mod-elsa}) adapts to
(\ref{eq:designed-input}) since the first block can implement LSA for
(\ref{eq:designed-input}) and the second block can be a skip connection
by Property \ref{lemma:elsa-skip}.  In this way, ELSA gives us flexible
matrix computation.

\item We used two sequential multi-head ELSAs in a
  module. This allows us to perform flexible calculations using a skip
  connection, multiplications with two or more matrices.  The number
  of multi-head ELSAs required depends on the task. 
  It is important that various matrix multiplication operations can be
  implemented by using a sufficient number of multi-head ELSAs.  In
  applications, it is natural that we can choose a model complexity
  under an input form, and it may not be natural that we need to explore
  an input form under a fixed architecture.

\item Although we have checked the justification of these matrix
      computations yb computer, it is not clear that these
      implementations are obtained by training; i.e. example-based
      updating of the weight and bias matrices.
Obviously, the above implementations of the gradient descent for ridge
      regression are heuristically constructed. For example, there may
      be implementations using LSA under a different input form or
      module; e.g. using mutiple LSAs for a single step. The
      same argument applies to ELSA.
These implementations may
      be found through training.
In addition, there may be an advantage of nonlinearity in self-attention.
 These are parts of our future work.

\item In this example of ridge regression, we extract relevant
       submatrices using the mask and move operation.
This extraction is a hard extraction in the sense that irrelevant
       elements are set to zero. However, depending on the task,
       weighted extraction is possible; i.e. it can be called a
       soft extraction.
In fact, an attention mechanism in a transformer is used in this way
      when used as a language model.

\end{itemize}

\section{Conclusions and future works}

The attention mechanism plays a key role in the in-context learning
ability of transformers.  In the attention mechanism, an attention
matrix encodes relationships among words in a sentence and is used as
weights for words in a sentence. Although the attention mechanism is
effective in language models, it is questionable whether it is suitable
for in-context learning in general tasks. In fact, we may need an
appropriate design of an input form (prompt) for a suitable
implementation of an algorithm, since matrix multiplication implemented
by an attention layer is restrictive. In this paper, by introducing a
bias matrix term in addition to multiplication of a weight matrix and an
input, we extended linear self-attention to cover various matrix
computations such as a constant matrix output, a skip connection, and
a multiplication of two matrices. As an example, we
heuristically implemented a batch-type gradient descent algorithm for
ridge regression by using a naive linear self-attention under a designed
input form and the extended linear self-attention under an input formed
naturally by enumerating variable vectors or matrices. Note that the
extended linear self-attention can also adapt to an input form, under
which a naive linear self-attention is applied.  In applications, the
training process when using the extended linear self-attention is not
clear. Therefore, as a future work, we need to numerically analyze the
extended linear self-attention. We also need to test the extended linear
self-attention with nonlinearity and consider the implementation of
various algorithms by using it.

\appendix

\section{Implementation of Gaussian elimination by using computational components}

\subsection{Background}

We here construct a solver for a closed-form solution of a ridge
estimate by using standard components of layered neural network.

\cite{Akyurek2022} has also implemented a closed-form solution of ridge
regression using a transformer. The important point in
\cite{Akyurek2022} is that several computational primitives that are
realized by a transformer are given explicitly, and a solver of ridge
regression is expressed as a processing or operation using the
primitives.  In other words, it implements an algorithm that outputs
ridge estimates by using computational primitives. In the implementation
of a solver of ridge regression, we need multiplication and division
operations. In \cite{Akyurek2022}, multiplication operations are required
in the implementation and are realized by using the GeLU (Gaussian
Error Linear Unit) nonlinearity based on Taylor's expansion; see also
\cite{Liang2017}. On the other hand, division operations are essential since we need
the inverse of the matrix to obtain ridge estimates. Note that we can say that
a gradient descent approximately computes the matrix inversion only by
multiplications and additions of matrices. However, we need an infinite
number of iterations if we obtain an exact solution. \cite{Akyurek2022}
uses the Sherman–Morrison formula to reduce the matrix inversion to
a sequence of rank-one updates performed example-by-example and, then,
the division operation is implemented by the ``LayerNorm'' function.

A ridge estimate is obtained by solving a system of linear equations. In
this appendix, we implement Gaussian elimination to solve a
system of linear equations by using computational components. 
Gaussian elimination is a basic method and is well known in numerical
computing; e.g. see \cite{Gentle1998,Michailidis2011}. Apart from
transformers, we just implement it by using a combination of standard
components of layered neural networks and it is not along with neural
network implementations. Nevertheless, it tells us what components we
need to realize an algorithm. Our implementation differs from
\cite{Akyurek2022} in particular with respect to multiplication and division
operations. We here implement multiplication operations by matrix multiplications
and a division operation by a ReLU network with one hidden layer.

\subsection{Some network components}

\subsubsection{Some notations used in appendix}

$\|\cdot\|$ denotes the Euclidean norm.  $\odot$ denotes the Hadamard
product.  $\relu$ is a ReLU function; i.e. $\relu(x)=(x)_+$ for
$x\in\R$.  $\idrelu(x)$ is an identity function implemented by ReLU;
i.e. $\idrelu(x)=\relu(x)-\relu(-x)$.  
$\delta_{i,j}$ is the Kronecker delta, which means $\delta_{i,j}=1$ if
$i=j$ and $0$ if $i\neq j$.

\subsubsection{Network with one hidden layer}

Let $\X$ and $\Z$ be $m\times n$ matrices. We consider to construct a
mapping from $\X$ into $\Z$ by a network with one hidden layer.
More precisely, the output of the network is defined by
\begin{align}
\label{eq:network-comp}
\Z=\Z(\X):=\sum_{k=1}^{\ok}
\left\{\V_k\odot\sigma\left(\W_k\odot\X+\B_k\right)+\C_k\right\},
\end{align}
where $\W_k$, $\V_k$, $\B_k$ and $\C_k$ are $m\times n$ matrices and
$\sigma$ is a nonlinear activation function which is componentwisely
applied if it is applied to a matrix.  We call this formulation of a
layered neural network a network component.
Although the input to the network may not be a usual weighted sum,
$\W_k$, $\V_k$, $\B_k$ and $\C_k$ correspond to input weights, output
weights, input biases and output biases. This mapping is componentwisely
written by
\begin{align}
\Z[i,j]=\sum_{k=1}^{\ok}\left\{
\V_k[i,j]\sigma(\W_k[i,j]\X[i,j]+\B_k[i,j])+\C_k[i,j]\right\}.
\end{align}

As an example, we here implement a componentwise affine transform by
using this network component.  Let $\sigma$ be a ReLU; i.e.
$\sigma=\relu$.  We assume that $\ok>2$.
We also assume that $\gamma_{i,j}\in\R$ and $C_{i,j}\in\R$.  By setting
$\W_1[i,j]=1$, $\W_2[i,j]=-1$, $\V_1[i,j]=\gamma_{i,j}$, $\V_2[i,j]=-\gamma_{i,j}$,
$\V_k[i,j]=0$ for $k=3,\ldots,\ok$, $\B_k[i,j]=0$ for $k=1,\ldots,\ok$,
$\C_1[i,j]=C_{i,j}$ and $\C_k[i,j]=0$ for $k=2,\ldots,\ok$, we have
\begin{align}
\label{eq:construct-affine}
\Z[i,j]&=\gamma_{i,j}\{\relu(\X[i,j])-\relu(-\X[i,j])\}+C_{i,j}\notag\\
&=\gamma_{i,j}\idrelu\left(\X[i,j]\right)+C_{i,j}\notag\\
&=\gamma_{i,j}\X[i,j]+C_{i,j}.
\end{align}
Therefore, a network component implements a componentwise affine transform. 
If $\gamma_{i,j}=1$ and $C_{i,j}=0$ then $\Z[i,j]=\X[i,j]$ which is an
identity function. If $\gamma_{i,j}=0$ then $\Z[i,j]=C_{i,j}$ which is an
arbitrary constant.
If $\gamma_{i,j}=1$ then $\Z[i,j]=\X[i,j]+C_{i,j}$ which is the addition of a constant.

On the other hand, we define $\M_{i,j,k,l}^{m,n}$ as an $m\times n$
matrix whose $(s,t)$-entry is $1$ if $1\le i\le s\le j\le m,~1\le k\le
t\le l\le n$ and $0$ otherwise. We call this a mask.  Conversely, we
define $\oM_{i,j,k,l}^{m,n}$ as an $m\times n$ matrix whose
$(s,t)$-entry is $0$ if $1\le i\le s\le j\le m,~1\le k\le t\le l\le n$
and $1$ otherwise. We call this an anti-mask.  For example,
$\M_{i,j,k,l}^{m,n}\odot\A$ implements a masking operation for
$\A[i\:j,k\:l]$, in which the size of the resulting matrix is $m\times
n$ and elements except for $\A[i\:j,k\:l]$ are zeros.  In
(\ref{eq:network-comp}), we set that $\ok=1$, $\B_1=\C_1=\O_{m,n}$,
$\V_1=\M_{i,j,k,l}^{m,n}$, any elements of $\W_1$ is $1$ and $\sigma$ is
an identity function. Then, we have
$\Z=\Z(\X)=\M_{i,j,k,l}^{m,n}\odot\X$; i.e. it can implement a masking
operation. Similarly, by using $\oM_{i,j,k,l}^{m,n}$ instead of $\M_{i,j,k,l}^{m,n}$, 
(\ref{eq:network-comp}) can implement an
anti-mask operation.

\subsubsection{Skip connection}

We consider two types of skip connection.  Let $\A$ be an $m\times n$
matrix and $\module$ is a module that consists of a set of components.
$\module$ receives $\A$ and outputs a matrix with a certain size. Addition
type skip connection computes $\module(\A)+\gamma\A$, in which the size
of $\module(\A)$ is $m\times n$ and $\gamma\in\{-1,+1\}$. Thus, it allows
subtraction.  Multiplication type skip connection computes $\gamma
\module(\A)\A$ or $\gamma \A\module(\A)$, in which the size of $\module(\A)$ is
$(s,m)$ for the former and is $(n,s)$ for the latter for a certain $s$.
Here, again we allow to assign $\gamma\in\{-1,+1\}$ while it may be
implemented by a part of computation in $\module$.

\subsubsection{Weighted sum operation}

We define $\SUM$ as an operation such that $\SUM(\A)$ for $\A$ is the
sum of all entries of $\A$. Let $\W$ and $\A$ be $m\times n$ weight and
input matrices respectively.  Then, $\SUM(\W\odot\A)$ is a weighted sum
and, moreover, $\SUM(\W\odot\A)+b$ is a weighted sum with a bias $b\in\R$.
Indeed, the weighted sum operation is relatively versatile.

We define a mapping from an $m_i\times n_i$ matrix into an $m_o\times
n_o$ matrix using the weighted sum with a bias term.  Let $\A$ be an
$m_i\times n_i$ matrix and $\P$ be an $m_o\times n_o$
matrix. $(m_o,n_o)$ is arbitrarily chosen as long as the operation can
be defined.  We define $\P[s,t]=\SUM(\W_{s,t}\odot\A)+b_{s,t}$.  If we
set $\W_{s,t}=\gamma_{s,t}\M_{i',i',k',k'}^{m,n}$ for
$\gamma_{s,t}\in\R$ then $\P[s,t]=\gamma_{s,t}\A[i',k']+b_{s,t}$; i.e.
affine transformation of $\A[i',k']$.  Therefore, if we set
$\gamma_{s,t}=1$ and $b_{s,t}=0$ then $\P[s,t]=\A[i',k']$, which enables
us an arbitrary mask and move operation that is also able to be
implemented by matrix multiplications with weight matrices as shown in
this paper. And, if we set $\W_{s,t}=\O_{m,n}$ then $\P[s,t]=b_{s,t}$;
i.e. inserting a constant.

For example, we implement a mask and move operation of $\A[i\:j,k\:l]$,
in which the size of $\P$ is the same as that of $\A$. For
$s=1,\ldots,m$ and $t=1,\ldots,n$, we define
$\W_{s,t}:=\M_{i',i',k',k'}^{m,n}$ for $1\le i'\le m$ and $1\le k'\le
n$. We assume that all indices are within a matrix size.  We define
$S:=\{(i'+a,k'+b):i\le i'\le j,~k\le k'\le l\}$ and set
\begin{align}
\W_{s,t}=\begin{cases}
\M_{i',i',k',k'}^{m,n} & (s,t)\in S\\
\O_{m,n} & {\rm otherwise}
         \end{cases}.
\end{align}
Then, it is obvious that 
\begin{align}
\P[s,t]=\SUM(\W_{s,t}\odot\A)=
\begin{cases}
\A[i',k'] & (s,t)\in S\\
0 & {\rm otherwise}
\end{cases}
\end{align}
holds. 

On the other hand, in constructing a network with one hidden layer,
we define
\begin{align}
\label{eq:network-ws}
\Z[i,j]:=\sum_{k=1}^{\ok}\left\{
\V_k[i,j]\sigma(\SUM(\W_{k,i,j}\odot\X)+\B_k[i,j])+\C_k[i,j]\right\},
\end{align}
where $\W_{k,i,j}$ is an $n\times m$ matrix and the other symbols are
the same as in the previous subsection except that the size of $\Z$ is
arbitrary and $(i,j)$ is within the size of $\Z$.  For example,
(\ref{eq:network-comp}) can be represented using the weighted sum with a
mask, in which we set $\W_{k,i,j}=\W_k\odot\M_{i,i,j,j}^{m,n}$ in
(\ref{eq:network-ws}).  If we set $\M_{i',i',j',j'}^{m,n}$ instead of
$\M_{i,i,j,j}^{m,n}$ then $\Z[i,j]$ is a function of $\X[i',j']$ which
is an arbitrary position in $\X$.

\subsection{Approximation of division by ReLU network}
\label{sec:invapprox}

We here give a realization of division operator by using a network
component.

To do this, we consider an approximation of a function $f$ on $\R$ by a
weighted sum of ReLUs; i.e. a ReLU network with one hidden layer.  We
assume that $f(x)=f(-x)$, $|f(x)|$ is monotonically decreasing and
$\lim_{x\to\infty}|f(x)|=0$ for $x\in\R$. We set $f(x)=1/x^2$ for
$x\in\R$. Then, $f$ satisfies the above condition and we can approximate
$1/x=xf(x)$ by a network with ReLUs and a skip connection for $x$ if we
can approximate $f$ by a network with ReLUs. Thus, division operator is
approximately realized by a ReLU network.

We define $(x)_+=\max\{0,x\}$ which is ReLU.

Let $x_0<x_1<\ldots<x_n<x_{n+1}$ be positive real numbers, where we set
$x_0=0$.  We define $I_k^+=(x_{k-1},x_k]$ and $I_k^-=(-x_k,-x_{k-1}]$
for
$k=1,\ldots,n+1$. $\{I_{n+1}^-,\ldots,I_1^-,I_1^+,\ldots,I_{n+1}^+\}$ is
a partition of $(-x_{n+1},x_{n+1}]$.  We set $y_k=f(x_k)=f(-x_k)$ for
$k=1,\ldots,n$ and define $y_0=y_1$ and $y_{n+1}=0$.  In the following,
we construct a piecewise linear function that takes $y_k$ at $\pm x_k$ and
approximates $f$ on $[-x_{n+1},-x_1]\bigcup[x_1,x_{n+1}]$.  More
specifically, we consider to approximate $f$ by piecewise linear
functions which are constructed by a sum of hard sigmoids that are
implemented by a sum of two ReLUs. 
It is roughly shown in Figure \ref{fig:approx-f}.

Note that this approximation may
not be mathematically rigorous and this point is discussed later.

\newcommand{\figwidtha}{30mm}
\newcommand{\figwidthb}{45mm}
\begin{figure}[h]
\begin{center}
\begin{minipage}[t]{\figwidtha}
\begin{center}
\includegraphics[width=\figwidtha]{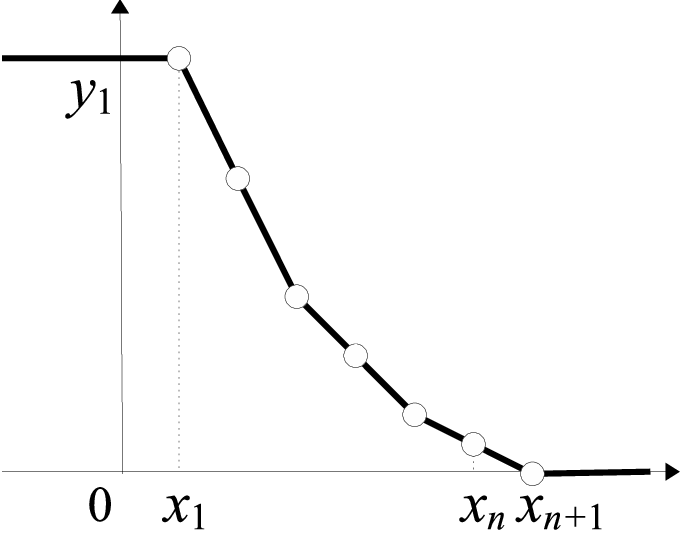}\\
(a)
\end{center}
\end{minipage}\hspace{3mm}
\begin{minipage}[t]{\figwidtha}
\begin{center}
\includegraphics[width=\figwidtha]{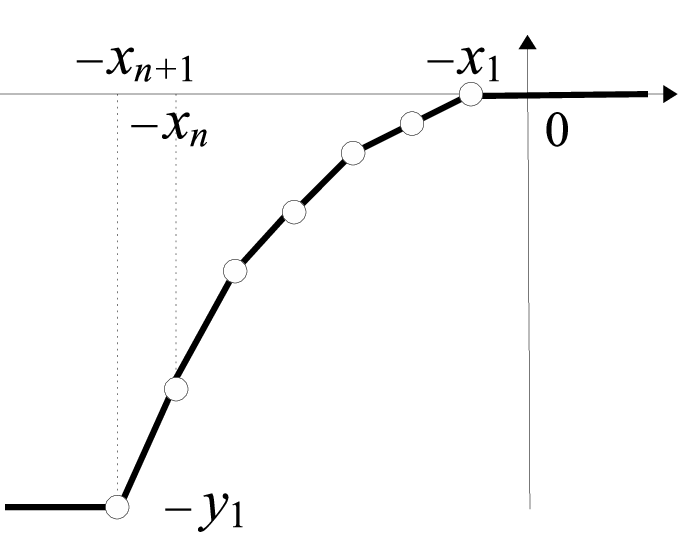}\\
(b)
\end{center}
\end{minipage}\hspace{3mm}
\begin{minipage}[t]{\figwidthb}
\begin{center}
\includegraphics[width=\figwidthb]{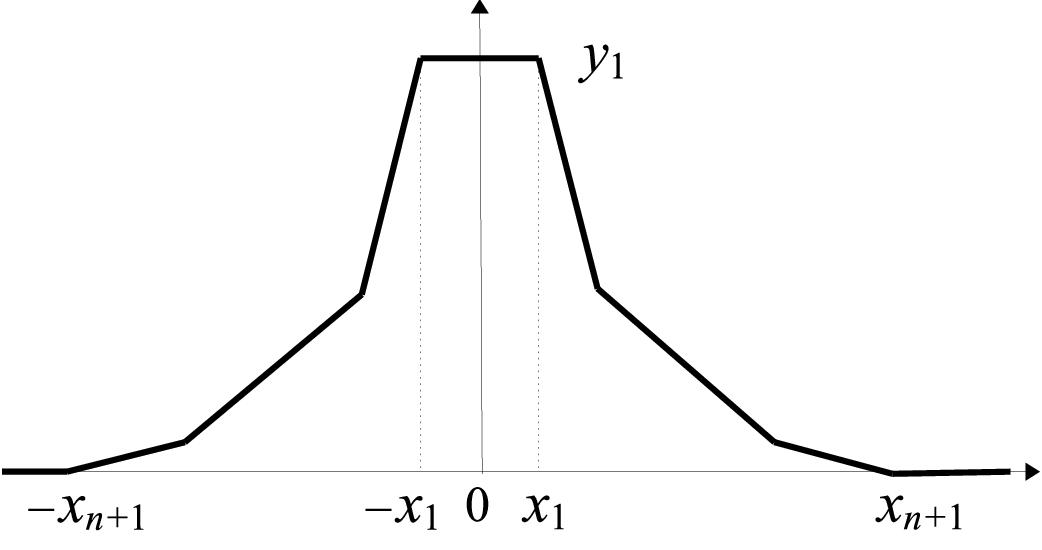}\\
(c)
\end{center}
\end{minipage}

\end{center}
\caption{Approximation $f$ by sum of ReLUs (hard sigmoids). (a) Approximation by
$r^+$. (b) Approximation by $r^-$. (c) $r^++r^-$ for approximating $f$.}
\label{fig:approx-f}
\end{figure}

\subsubsection{Construction of approximator}

We define, for $x\in\R$ and $k=2,\ldots,n+1$,
\begin{align}
\of_{k,0}^+(x)&=\alpha_k^+(x-x_k)\\ 
\of_{k,-1}^+(x)&=\alpha_k^+(x-x_{k-1}),
\end{align}
where
\begin{align}
\alpha_k^+=\frac{y_k-y_{k-1}}{x_k-x_{k-1}}. 
\end{align}
We then define
\begin{align}
r_k^+(x)=r_{k,0}^+(x)+r_{k,-1}^+(x),
\end{align}
where
\begin{align}
r_{k,0}^+(x)&=(\of_{k,0}^+(x))_+\\
r_{k,-1}^+(x)&=-(\of_{k,-1}^+(x))_+.
\end{align}
Note that $\alpha_k^+<0$ holds since $y_k<y_{k-1}$ and $x_k>x_{k-1}$.
Then, it is easy to see that
\begin{align}
r_k^+(x)=
\begin{cases}
y_{k-1}-y_k & x\le x_{k-1}\\
\alpha_k^+(x-x_k) & x_{k-1}<x\le x_k\\
0 & x_k<x.
\end{cases} 
\end{align}
Therefore $r_k^+$ is a hard sigmoid.

We define
\begin{align}
r^+(x)=\sum_{k=2}^{n+1}r_k^+(x). 
\end{align}
\begin{itemize}
 \item If $x\le x_1$ then $x<x_k$ holds for any
$k=2,\ldots,n+1$. Therefore, we have $r_k^+(x)=y_{k-1}-y_k$ for any $k$.
We thus have, for $x\le x_1$, 
\begin{align}
r^+(x)=(y_1-y_2)+(y_2-y_3)+\cdots+(y_n-y_{n+1})=y_1-y_{n+1}=y_1
\end{align}
by the definition of $y_{n+1}$.
\item If $x_{n+1}<x$ then $x_k<x$ for $k=2,\ldots,n+1$.
Therefore, $r_k^+(x)=0$ for $k=2,\ldots,n+1$. This implies that
$r^+(x)=0$ for $x>x_{n+1}$. 
\item If $x_1< x\le x_{n+1}$ then there exists $m\in\{2,\ldots,n+1\}$ such that 
$x_{m-1}<x\le -x_m$. 
We then have
\begin{align}
r^+(x)&=\sum_{k=2}^{m-1}r_k^+(x)+r_m^+(x)
+\sum_{k=m+1}^{n+1}r_k^+(x)\notag\\
&=0+\alpha_m(x-x_m)+
(y_m-y_{m+1})+\cdots+(y_n-y_{n+1})\notag\\
&=\alpha_m(x-x_m)+y_m-y_{n+1}\notag\\
&=\alpha_m(x-x_m)+y_m.
\end{align}
\end{itemize}
As a result, we have
\begin{align}
r^+(x)=
\begin{cases}
y_1 & x\le x_1\\
\alpha_k^+(x-x_k) +y_k & x\in I_k^+\\
0 & x>x_{n+1},
\end{cases} 
\end{align}
where $k=2,\ldots,n+1$.
Note that we have, for $k=2,\ldots,n+1$
\begin{align}
r^+(x_k)&=\alpha_k^+
(x_k-x_k)+y_k=y_k\\
r^+(x_{k-1})&=\alpha_k^+
(x_{k-1}-x_k)+y_k
=(y_{k-1}-y_k)+y_k=y_{k-1}
\end{align}
which implies that $r^+(x)$ is consistent with $f(x)$ at $x=x_k$ for $k=2,\ldots,n+1$.

On the other hand, we define, for $x\in\R$ and $k=2,\ldots,n+1$,
\begin{align}
\of_{k,0}^-(x)&=\alpha_k^-(x+x_k)\\ 
\of_{k,-1}^-(x)&=\alpha_k^-(x+x_{k-1}),
\end{align}
where
\begin{align}
\alpha_k^-=-\frac{y_{k-1}-y_k}{-x_{k-1}-(-x_k)}
=\frac{y_k-y_{k-1}}{x_k-x_{k-1}}.
\end{align}
We then define
\begin{align}
r_k^-(x)=r_{k,0}^-(x)+r_{k,-1}^-(x),
\end{align}
where
\begin{align}
r_{k,0}^-(x)&=(\of_{k,0}^-(x))_+\\
r_{k,-1}^-(x)&=-(\of_{k,-1}^-(x))_+.
\end{align}
Note that $\alpha_k^-<0$ holds since $y_k<y_{k-1}$ and $x_k>x_{k-1}$.
Then, it is easy to see that
\begin{align}
r_k^-(x)=
\begin{cases}
y_k-y_{k-1} & x\le -x_k\\
-\alpha_k^-(x+x_{k-1}) & -x_k<x\le -x_{k-1}\\
0 & -x_{k-1}<x.
\end{cases} 
\end{align}
We define
\begin{align}
r^-(x)=\sum_{k=2}^{n+1}r_k^-(x). 
\end{align}
As in case of $r^+$, we have
\begin{align}
r^-(x)=
\begin{cases}
-y_1 & x\le -x_{n+1}\\
-\alpha_k^-(x+x_{k-1})+y_{k-1}-y_1 & x\in I_k^-\\
0 & x>-x_1,
\end{cases} 
\end{align}
where $k=2,\ldots,n+1$.
Note that we have, for $k=2,\ldots,n+1$
\begin{align}
r^-(-x_k)&=-\alpha_m^-
(-x_k+x_{k-1})+y_{k-1}-y_1=y_k-y_1\\
r^-(-x_{k-1})&=-\alpha_m^-
(-x_{k-1}+x_{k-1})+y_{k-1}-y_1=y_{k-1}-y_1
\end{align}
which implies that $r^-(x)$ is consistent with $f(x)-y_1$ at $x=-x_k$ for
$k=2,\ldots,n+1$.

Finally, we define
\begin{align}
\sigma_f(x)=r^+(x)+r^-(x).
\end{align}
By the construction of $r^+$ and $r^-$, we have
\begin{align}
\label{eq:invsqr}
\sigma_f(x)=
\begin{cases}
0 & x>x_{n+1}\\
\alpha_k^+(x-x_k)+y_k & x\in I_k^+\\
y_1 & x\in I_1^+\bigcup I_1^-\\
-\alpha_k^-(x+x_{k-1})+y_{k-1} & x\in I_k^-\\
0 & x\le -x_{n+1},
\end{cases}
\end{align}
where $k=2,\ldots,n+1$.
Obviously, we have $\sigma_f(x_k)=\sigma_f(-x_k)=y_k$ as desired.

If $f(x)=1/x^2$ then we write $\invsqr(x)=\sigma_f(x)$. 

\subsubsection{Discussion for approximation quality}

In the above construction, the piecewise linear approximation may work
well on $[-x_{n+1},-x_1]\cup[x_1,x_{n+1}]$ if $n$ is large.  On the
other hand, it causes a problem in $I_0=[-x_1,x_1]$ and
$I_{\infty}=(-\infty,-x_{n+1}]\cup[x_{n+1},\infty)$.  The approximation
error on $I_{\infty}$ may not so critical since $|f|$ is monotonically
decreasing and we can take sufficiently large value for $x_{n+1}$.  In
$I_0$, the difference is huge when $x\to 0$. However, in case of
$x\simeq 0$, division cannot be properly conducted in computer due to
overflow. Therefore, we can say that, by choosing small value for $x_1$,
our approximation is not mathematically rigorous but computationally
feasible.

\subsection{Combination of computational components that solves
Gaussian elimination}

\subsubsection{Preliminary}

Let $\F$ be an $m\times m$ matrix and $\valpha$ be an $m\times 1$
vector.  For an $m\times 1$ vector $\x$, we consider to solve a set of
linear equations $\F\x=\valpha$ in terms of $\x$. There is an unique
solution if $\F$ is invertible.  We assume that $\F$ is non-singular.
Gaussian elimination consists of forward elimination and backward
substitution.  The forward elimination algorithm converts $\F$ to an
upper triangular matrix, by which the solution is obtained backward
substitution. We call a set of linear equations determined by
$[\F,\valpha]$ a system of linear equations. If $\F$ is an upper
triangular matrix then we call it an upper triangular system of linear
equations. We here construct modules that implement Gaussian
elimination. 

Withou loss of generality, for $m\ge 2$, let $\P$ be an $m\times(m+1)$
matrix which represents a system of linear equations.  The $(i,j)$-entry
of $\P$ is denoted by $p_{i,j}$ instead of $\P[i,j]$ for simplifying the
expression. We first construct a module that executes forward
elimination; i.e. column reduction process. $\P$ is an input to
this module and the output is an upper triangular system
of linear equations. To do this, we set the input matrix to
\begin{align}
\P_1:=\begin{bmatrix}
\P\\
\O_{1,m+1}       
      \end{bmatrix},
\end{align}
in which an extra zero vector is added to the last row of $\P$.
$\P_1$ is an $(m+1)\times(m+1)$ matrix.

\subsubsection{Elimination of the first column}

We demonstrate an elimination process of the first column.  

\begin{itemize}
\item By a masking operation for $\P_1$, we obtain an $(m+1)\times
      (m+1)$ matrix $\Z^{(1)}:=\Z^{(1)}(\P_1)$, in which
\begin{align}
\Z^{(1)}[i,j]=\begin{cases}
               p_{i,j} & i=j=1\\
0 & {\rm otherwise}
              \end{cases}.
\end{align}

\item According to \ref{sec:invapprox}, by using a network component,
we compute an $(m+1)\times(m+1)$
      matrix $\Z^{(2)}$ whose $(i,j)$-entry is given by
\begin{align}
\Z^{(2)}[i,j]:=
\begin{cases}
\invsqr\left(\Z^{(1)}[i,j]\right)\simeq 1/\Z^{(1)}[i,j]^2\simeq
1/p_{i,j}^2  & i=j=1\\
\idrelu\left(\Z^{(1)}[i,j]\right)=\Z^{(1)}[i,j]=0 & {\rm otherwise}
\end{cases},
\end{align}
where $\invsqr$ is defined in \ref{sec:invapprox} and $\idrelu$ is an
identity function implemented by ReLU. Hereafter,
      we use ``$=$'' instead of ``$\simeq$''.

\item By receiving $\Z^{(1)}$ from a skip connection, we obtain
$\Z^{(3)}:=-\Z^{(1)}\Z^{(2)}$, 
which is an $m\times m$ matrix whose $(1,1)$-entry is approximately
      $-1/p_{1,1}$ and otherwise $0$.

\item By a masking operation for $\P_1$, we obtain an
      $(m+1)\times (m+1)$ matrix $\Z^{(4)}:=\Z^{(4)}(\P_1)$, in which
\begin{align}
\Z^{(4)}[i,j]=
\begin{cases}
p_{i,j}  & j=1,~i=2,\ldots,m\\
0 & {\rm otherwise}
\end{cases},
\end{align}
where $\P_1$ comes from a skip connection.

\item We obtain $\Z^{(5)}:=\Z^{(4)}\Z^{(3)}$, which is given by
\begin{align}
\Z^{(5)}=\begin{bmatrix}
    \O_{1,1} & \O_{1,m}\\
    -\gamma_2 & \O_{1,m}\\
    \cdots & \cdots \\
    -\gamma_m & \O_{1,m}\\
    \O_{1,1} & \O_{1,m}\\
  \end{bmatrix}
\end{align}
where $\gamma_i:=p_{i,1}/p_{1,1}$.

\item According to (\ref{eq:construct-affine}), we obtain
      $\Z^{(6)}=\Z^{(5)}+\I_{m+1}$ by using a network component.

\item By receiving $\P_1$ from a skip connection, we obtain
$\P_2:=\Z^{(6)}\P_1$, which is given by 
\begin{align}
\P_2=
\begin{bmatrix}
    p_{1,1} & p_{1,2} & p_{1,3} & \cdots & p_{1,m} & p_{1,m+1}\\
    0 & p_{2,2}^{(2)} & p_{2,3}^{(2)} & \cdots & 
p_{2,m}^{(2)} & p_{2,m+1}^{(2)}\\
\cdots & \cdots & \cdots & \cdots & \cdots & \cdots\\
    0 & p_{m,2}^{(2)} & p_{m,3}^{(2)} & \cdots & 
p_{m,m}^{(2)} & p_{m,m+1}^{(2)}\\
0 & 0 & 0 & \cdots & 0 & 0
   \end{bmatrix},
\end{align}
where
\begin{align}
p_{i,j}^{(2)}&=p_{i,j}-\gamma_ip_{1,j}
\end{align}
for $i=2,\ldots,m$ and $j=2,\ldots,m+1$.

\end{itemize}

\subsubsection{Elimination of the $k$-th column}

Elimination of the $k$-th column is almost a copy of the process for the first column.
Since there may be no confusion, we employ the same symbols for matrices as
in the previous subsection. 

At the $k$-th step, we have
\begin{align}
\P_{k-1}=
\begin{bmatrix}
    p_{1,1} & \cdots & p_{1,k-1} & p_{1,k} & p_{1,k+1} & 
\cdots &  p_{1,m+1}\\
               0 & \cdots & p_{2,k-1}^{(2)} & p_{2,k}^{(2)} & p_{2,k+1}^{(2)} & 
\cdots & p_{2,m+1}^{(2)}\\
\cdots & \cdots & \cdots & \cdots & \cdots & \cdots & \cdots\\
               0 & \cdots & p_{k-1,k-1}^{(k-1)} & p_{k-1,k}^{(k-1)} 
& p_{k-1,k+1}^{(k-1)} & \cdots & p_{k-1,m+1}^{(k-1)}\\
               0 & \cdots & 0 & p_{k,k}^{(k-1)} & p_{k,k+1}^{(k-1)} & 
\cdots & p_{k,m+1}^{(k-1)}\\
               0 & \cdots & 0 & p_{k+1,k}^{(k-1)} & p_{k+1,k+1}^{(k-1)} & 
\cdots &  p_{k+1,m+1}^{(k-1)}\\
\cdots & \cdots \cdots & \cdots & \cdots & \cdots & \cdots\\
    0 & \cdots & 0 & p_{m,k}^{(k-1)} & p_{m,k+1}^{(k-1)} & 
\cdots &  p_{m,m+1}^{(k-1)}\\
    0 & \cdots & 0 & 0 & 0 & \cdots & 0
   \end{bmatrix}.
\end{align}

The construction of the $k$-th step is as follows.
\begin{itemize}
\item By a masking operation for $\P_{k-1}$, we obtain an
      $(m+1)\times (m+1)$ matrix
$\Z^{(1)}=\Z^{(1)}(\P_{k-1})$, in which
\begin{align}
\Z^{(1)}[i,j]=
\begin{cases}
p_{i,j}^{(k-1)} & i=j=k\\
0 & {\rm otherwise}
\end{cases}
\end{align}
for $1\le k\le m$.

\item According to \ref{sec:invapprox}, by using a network component, we
      obtain an $(m+1)\times(m+1)$ matrix $\Z^{(2)}$ whose
      $(i,j)$-entry is given by
\begin{align}
\Z^{(2)}[i,j]=
\begin{cases}
\invsqr\left(\Z^{(1)}[i,j]\right)
\simeq 1/(\Z^{(1)}[k,k])^2=1/(p_{k,k}^{(k-1)})^2
 & i=j=k\\
\idrelu\left(\Z^{(1)}[i,j]\right)=\Z^{(1)}[i,j]=0 & {\rm otherwise}
\end{cases}.
\end{align}

\item By receiving $\Z^{(1)}$ from a skip connection, we obtain
$\Z^{(3)}:=-\Z^{(1)}\Z^{(2)}$,
which is a matrix whose $(k,k)$-entry is approximately
      $-1/p_{k,k}^{(k-1)}$ and otherwise $0$.

\item By a masking operation for $\P_{k-1}$, we obtain an
      $(m+1)\times (m+1)$ matrix
$\Z^{(4)}:=\Z^{(4)}(\P_{k-1})$, in which
\begin{align}
\Z^{(4)}[i,j]:=
\begin{cases}
p_{i,j}^{(k-1)}  & j=k,~i=k+1,\ldots,m\\
0 & {\rm otherwise}
\end{cases},
\end{align}
where $\P_{k-1}$ comes from a skip connection.

\item We obtain
$\Z^{(5)}:=\Z^{(4)}\Z^{(3)}$, in which $(i,j)$-entry is 
\begin{align}
\Z^{(5)}[i,j]=
\begin{cases}
-\gamma_i & j=k,~i=k+1,\ldots,m\\
0 & {\rm otherwise} 
\end{cases},
\end{align}
where $\gamma_i:=p_{i,k}^{(k-1)}/p_{k,k}^{(k-1)}$.

\item According to (\ref{eq:construct-affine}), we obtain
      $\Z^{(6)}=\Z^{(5)}+\I_{m+1}$ by using a network component.

\item By receiving $\P_{k-1}$ from a skip connection, we obtain
$\P_k:=\Z^{(6)}\P_{k-1}$, which is given by 
\begin{align}
\P_k=
\begin{bmatrix}
    p_{1,1} & \cdots & p_{1,k-1} & p_{1,k} & p_{1,k+1} & 
\cdots & p_{1,m+1}\\
               0 & \cdots & p_{2,k-1}^{(2)} & p_{2,k}^{(2)} & p_{2,k+1}^{(2)} & 
\cdots & p_{2,m+1}^{(2)}\\
\cdots & \cdots & \cdots & \cdots & \cdots & \cdots & \cdots\\
               0 & \cdots & p_{k-1,k-1}^{(k-1)} & p_{k-1,k}^{(k-1)} 
& p_{k-1,k+1}^{(k-1)} & \cdots & p_{k-1,m+1}^{(k-1)}\\
               0 & \cdots & 0 & p_{k,k}^{(k-1)} & p_{k,k+1}^{(k-1)} & 
\cdots & p_{k,m+1}^{(k-1)}\\
               0 & \cdots & 0 & 0 & p_{k+1,k+1}^{(k)} & 
\cdots & p_{k+1,m+1}^{(k)}\\
\cdots & \cdots \cdots & \cdots & \cdots & \cdots & \cdots\\
    0 & \cdots & 0 & 0 & p_{m,k+1}^{(k)} & \cdots & 
p_{m,m+1}^{(k)}\\
    0 & \cdots & 0 & 0 & 0 & \cdots & 0 
   \end{bmatrix},
\end{align}
where
\begin{align}
p_{i,j}^{(k)}&=p_{i,j}^{(k-1)}-\gamma_ip_{k,j}^{(k-1)}
\end{align}
for $i=k+1,\ldots,m$ and $j=k+1,\ldots,m+1$.
\end{itemize}

By repeating this procedure for $k=1,\ldots,m-1$, we can obtain
$\P_{m-1}$, in which $\P_{m-1}[1\:m,\:]$ is an upper triangular system
of linear equations and $\P_{m-1}[m+1,\:]=\O_{1,m+1}$.  We refer to this
module for the $k$-th step as $\FE_k$, by which we write
$\P_k=\FE_{k}(\P_{k-1})$.

\subsubsection{Module that solves upper triangular system}

The reminder is to construct a module that solves an upper triangular system
of linear equations. This executes backward substitution for $\P_{m-1}$.
Let $\Q$ be an $(m+1)\times(m+1)$, in which $\Q[1\:m,\:]$ is 
 an upper triangular system of linear
equations and the last row is $\O_{1,m+1}$; i.e.
\begin{align}
\Q= 
\begin{bmatrix}
    q_{1,1} & q_{1,2} & \cdots & q_{1,m-1} & q_{1,m} & q_{1,m+1}\\
    0 & q_{2,2} & \cdots & q_{2,m-1} & q_{2,m} & q_{2,m+1}\\
\cdots & \cdots & \cdots & \cdots & \cdots & \cdots\\
    0 & 0 & \cdots & q_{m-1,m-1} & q_{m-1,m} & q_{m-1,m+1}\\
    0 & 0 & \cdots & 0 & q_{m,m} & q_{m,m+1}\\
    0 & 0 & \cdots & 0 & 0 & 0
   \end{bmatrix}.
\end{align}
Here, we first show a processes for obtaining solutions of $m$-th and
$(m-1)$th variables (the last two solutions) as demonstrations and then
show a solving process in a general case.

\subsubsection{Computation of the $m$-th variable}

We solve the last equality. 
\begin{itemize}
 \item By a masking operation for $\Q$, we obtain an
       $(m+1)\times (m+1)$ matrix
$\Z^{(1)}:=\Z^{(1)}(\Q)$, in which
\begin{align}
\Z^{(1)}[i,j]=
\begin{cases}
q_{i,j} & i=j=m\\
0 & {\rm otherwise}
\end{cases}.
\end{align}

\item According to \ref{sec:invapprox}, by using a network component, we
       obtain an $(m+1)\times(m+1)$ matrix
       $\Z^{(2)}:=\Z^{(2)}(\Z^{(1)})$ whose $(i,j)$-entry is
\begin{align}
\Z^{(2)}[i,j]=
\begin{cases}
\invsqr(\Z^{(1)}[i,j])\simeq 1/{q_{m,m}^2} & i=j=m\\
\idrelu(\Z^{(1)}[i,j])=0 & {\rm otherwise}
\end{cases}.
\end{align}

\item By receiving $\Z^{(1)}$ from a skip connection, we obtain
       $\Z^{(3)}:=\Z^{(1)}\Z^{(2)}$, in which
\begin{align}
\Z^{(3)}[i,j]=
\begin{cases}
1/q_{m,m} & i=j=m\\
0 & {\rm otherwise}
\end{cases}.
\end{align}

\item According to (\ref{eq:construct-affine}), by using a network
       component,
we obtain an $(m+1)\times(m+1)$ matrix $\Z^{(4)}:=\Z^{(4)}(\Z^{(3)})$
       whose $(i,j)$-entry is
\begin{align}
\Z^{(4)}[i,j]=-\idrelu(\Z^{(3)}[i,j])+C_{i,j}=
\begin{cases}
-1/q_{m,m} & i=j=m\\
1 & i=j\neq m\\
0 & {\rm otherwise}
\end{cases},
\end{align}
where $C_{i,j}$ is set to $C_{i,j}=1$ if $i=j\neq m$ and $0$ otherwise. 

\item By receiving $\Q$ from a skip connection and using an anti-mask
       operation, we obtain
$\Q_m:=\oM_{m,m,m,m}^{m+1,m+1}\Z^{(4)}\Q$ which is given by
\begin{align}
\Q_m= 
\begin{bmatrix}
    q_{1,1} & q_{1,2} & \cdots & q_{1,m-1} & q_{1,m} & q_{1,m+1}\\
    0 & q_{2,2} & \cdots & q_{2,m-1} & q_{2,m} & q_{2,m+1}\\
\cdots & \cdots & \cdots & \cdots & \cdots & \cdots\\
    0 & 0 & \cdots & q_{m-1,m-1} & q_{m-1,m} & q_{m-1,m+1}\\
    0 & 0 & \cdots & 0 & 0 & \xi_m\\
    0 & 0 & \cdots & 0 & 0 & 0
   \end{bmatrix},
\end{align}
where $\xi_m:=q_{m,m+1}/q_{m,m}$ that is the solution to the $m$-th variable.
We need an anti-masking process here for keeping $\xi_m$ in $\Q_m$.
\end{itemize}

The module for computing a solution to $m$-th variable is denoted
$\BS1_m$ and we have $\Q_m=\BS1_m(\Q)$.

\subsubsection{Computation of the $(m-1)$-th variable}

As a demonstration, we next solve the $(m-1)$-th equality.  This process
is decomposed into two steps.  The first step is a process, in which we
insert $\xi_m$ to $m$-th variable and update $(m+1)$-th column of $\Q_m$.
The second step is the computation of a solution to $(m-1)$-th variable,
in which the updated $(m,m+1)$-entry is divided by $(m-1,m-1)$-entry.

\begin{itemize}

\item By a masking operation for $\Q_m$, we obtain an
      $(m+1)\times (m+1)$ matrix
$\Z^{(1)}:=\Z^{(1)}(\Q_m)$, in which
\begin{align}
\Z^{(1)}[i,j]=
\begin{cases}
\xi_m & i=m,~j=m+1\\
0 & {\rm otherwise}
\end{cases}.
\end{align}

\item According to (\ref{eq:construct-affine}), by using a network component,
we obtain an $(m+1)\times(m+1)$ matrix 
$\Z^{(2)}:=\Z^{(2)}(\Z^{(1)})$ whose $(i,j)$-entry is
\begin{align}
\Z^{(2)}[i,j]=-\idrelu(\Z^{(1)}[i,j])+C_{i,j}=
\begin{cases}
-\xi_m & i=m,~j=m+1\\
1 & i=j\\
0 & {\rm otherwise}
\end{cases},
\end{align}
where $C_{i,j}=\delta_{i,j}$. Indeed, this computes
$\Z^{(2)}[i,j]=-\Z^{(1)}[i,j]+\I_{m+1}$.

\item By receiving $\Q_m$ from a skip connection, we obtain
$\Z^{(3)}:=\Q_m\Z^{(2)}$ which is given by
\begin{align}
\Z^{(3)}= 
\begin{bmatrix}
    q_{1,1} & \cdots & q_{1,m-2} & q_{1,m-1} & q_{1,m} & \xi_{1,m}\\
\cdots & \cdots & \cdots & \cdots & \cdots & \cdots\\
    0 & \cdots & q_{m-2,m-2} & q_{m-2,m-1} & q_{m-2,m} & \xi_{m-2,m}\\
    0 & 0 & \cdots & q_{m-1,m-1} & q_{m-1,m} & \xi_{m-1,m}\\
    0 & 0 & \cdots & 0 & 0 & \xi_m\\
    0 & 0 & \cdots & 0 & 0 & 0
   \end{bmatrix},
\end{align}
where $\xi_{k,m}:=q_{k,m+1}-\xi_mq_{k,m}$ for $k=1,\ldots,m-1$.

\item Hereafter, we compute the solution to the $(k-1)$-th variable,
      which is almost the same as the computation of the $m$-th variable.
By a masking operation for $\Z^{(3)}$, we obtain an $(m+1)\times
      (m+1)$ matrix
$\Z^{(4)}:=\Z^{(4)}(\Z^{(3)})$, in which
\begin{align}
\Z^{(4)}[i,j]=
\begin{cases}
q_{i,j} & i=j=m-1\\
0 & {\rm otherwise}
\end{cases}.
\end{align}

\item According to \ref{sec:invapprox}, by using a network component, 
we obtain an $(m+1)\times(m+1)$ matrix
      $\Z^{(5)}:=\Z^{(5)}(\Z^{(4)})$, in which
\begin{align}
\Z^{(5)}[i,j]=
\begin{cases}
\invsqr(\Z^{(4)}[i,j])\simeq 1/{q_{i,j}^2} & i=j=m-1\\
\idrelu(\Z^{(4)}[i,j])=0 & {\rm otherwise}
\end{cases}.
\end{align}

\item By receiving $\Z^{(4)}$ from a skip connection, we have
       $\Z^{(6)}=\Z^{(5)}\Z^{(4)}$ whose $(i,j)$-entry is
\begin{align}
\Z^{(6)}[i,j]=
\begin{cases}
1/q_{i,j} & i=j=m-1\\
0 & {\rm otherwise}
\end{cases}.
\end{align}

\item According to (\ref{eq:construct-affine}), by using a network component,
we obtain an $(m+1)\times(m+1)$ matrix 
$\Z^{(7)}:=\Z^{(7)}(\Z^{(6)})$ whose $(i,j)$-entry is
\begin{align}
\Z^{(7)}[i,j]=\idrelu(\Z^{(6)}[i,j])+C_{i,j}=
\begin{cases}
1/q_{i,j} & i=j=m-1\\
1 & i=j\neq m-1\\
0 & {\rm otherwise}
\end{cases},
\end{align}
where $C_{i,j}$ is set to $C_{i,j}=1$ if $i=j\neq m-1$ and $0$ otherwise. 

\item By receiving $\Z^{(3)}$ from a skip connection and using an
      anti-mask operation, we obtain
$\Q_{m-1}:=\oM_{m-1,m-1,m-1,m-1}^{m+1,m+1}\Z^{(7)}\Z^{(3)}$, 
which is given by
\begin{align}
\Q_{m-1}= 
\begin{bmatrix}
    q_{1,1} & \cdots & q_{1,m-2}& q_{1,m-1} & q_{1,m} & \xi_{1,m}\\
\cdots & \cdots & \cdots & \cdots & \cdots & \cdots\\
    0 & \cdots & q_{m-2,m-2} & q_{m-2,m-1} & q_{m-2,m} & \xi_{m-2,m}\\
    0 & \cdots & 0 & 0 & q_{m-1,m}/q_{m-1,m-1} & \xi_{m-1}\\
    0 & \cdots & 0 & 0 & 0 & \xi_m\\
    0 & \cdots & 0 & 0 & 0 & 0
   \end{bmatrix},
\end{align}
where 
\begin{align}
\xi_{m-1}:=\xi_{m-1,m}/q_{m-1,m-1}=
\frac{1}{q_{m-1,m-1}}\left(
q_{k,m+1}-\xi_mq_{k,m}\right),
\end{align}
which is the solution to the $(m-1)$-th variable.
\end{itemize}

\subsubsection{Implementation of backward substitution}
\label{sec:BS}

Along this construction for the $(m-1)$-th variable, a solution to the $(m-s-1)$-th
variable for $s=0,1,\ldots,m-2$ is obtained by the following steps.
We set $t=m-s$ for simplifying the expressions. We have
\begin{align}
\Q_t:= 
\begin{bmatrix}
q_{1,1} &  \cdots & q_{1,t-2} & q_{1,t-1} & q_{1,t} & \cdots  & \xi_{1,t}\\
\cdots &  \cdots & \cdots & \cdots  & \cdots & \cdots & \cdots\\
        0 & \cdots & q_{t-2,t-2}& q_{t-2,t-1} & q_{t-2,t} & \cdots & \xi_{t-2,t}\\
        0 & \cdots & 0 & q_{t-1,t-1} & q_{t-1,t} & \cdots & \xi_{t-1,t}\\
        0 & \cdots & 0 & 0 & 0 & \cdots &  \xi_t\\
\cdots & \cdots & \cdots  & \cdots & \cdots & \cdots\\
        0 & \cdots & 0 &       0   & \cdots & \cdots & \xi_m\\
        0 & \cdots & 0 &       0   & \cdots & \cdots & 0
\end{bmatrix},
\end{align}
where
\begin{align}
\xi_{k,t}=
q_{k,m+1}-\sum_{j=1}^{s}\xi_{m-j+1}q_{k,m-j+1}.
\end{align}
and $\xi_k$ is a solution to the $k$-th variable. 
Under this setting, we find a solution to $(t-1)$-th variable.

\begin{itemize}

\item By a masking operation for $\Q_t$, we obtain an $(m+1)\times (m+1)$ matrix
$\Z^{(1)}:=\Z^{(1)}(\Q_t)$, in which
\begin{align}
\Z^{(1)}[i,j]=
\begin{cases}
\xi_t & i=t,~j=m+1\\
0 & {\rm otherwise}
\end{cases}.
\end{align}

\item According to (\ref{eq:construct-affine}), by using a network
      component,
we obtain an $(m+1)\times(m+1)$ matrix 
$\Z^{(2)}:=\Z^{(2)}(\Z^{(1)})=-\Z^{(1)}[i,j]+\I_{m+1}$.

\item By receiving $\Q_t$ from a skip connection, we obtain
$\Z^{(3)}=\Q_t\Z^{(2)}$, which is given by
\begin{align}
\Z^{(3)}= 
\begin{bmatrix}
q_{1,1} &  \cdots & q_{1,t-2} & q_{1,t-1} & q_{1,t} & \cdots  & \xi_{1,t-1}\\
\cdots &  \cdots & \cdots & \cdots  & \cdots & \cdots & \cdots\\
        0 & \cdots & q_{t-2,t-2}& q_{t-2,t-1} & q_{t-2,t} & \cdots & \xi_{t-2,t-1}\\
        0 & \cdots & 0 & q_{t-1,t-1} & q_{t-1,t} & \cdots & \xi_{t-1,t-1}\\
        0 & \cdots & 0 & 0 & 0 & \cdots &  \xi_t\\
\cdots & \cdots & \cdots  & \cdots & \cdots & \cdots\\
        0 & \cdots & 0 &       0   & \cdots & \cdots & \xi_m\\
        0 & \cdots & 0 &       0   & \cdots & \cdots & 0
\end{bmatrix},
\end{align}
where $\xi_{k,t-1}:=\xi_{k,t}-\xi_tq_{k,t}$.

\item By using a masking operation for $\Z^{(3)}$, we obtain an
      $(m+1)\times (m+1)$ matrix
$\Z^{(4)}:=\Z^{(4)}(\Z^{(3)})$, in which
\begin{align}
\Z^{(4)}[i,j]=
\begin{cases}
q_{i,j} & i=j=t-1\\
0 & {\rm otherwise}
\end{cases}.
\end{align}

\item We obtain an $(m+1)\times(m+1)$ matrix
      $\Z^{(5)}:=\Z^{(5)}(\Z^{(4)})$, in which
\begin{align}
\Z^{(5)}[i,j]=
\begin{cases}
\invsqr(\Z^{(4)}[i,j])\simeq 1/{q_{i,j}^2} & i=j=t-1\\
\idrelu(\Z^{(4)}[i,j])=0 & {\rm otherwise}
\end{cases}.
\end{align}

\item By receiving $\Z^{(4)}$ from a skip connection, we have
       $\Z^{(6)}=\Z^{(5)}\Z^{(4)}$ whose $(i,j)$-entry is
\begin{align}
\Z^{(6)}[i,j]=
\begin{cases}
1/q_{i,j} & i=j=t-1\\
0 & {\rm otherwise}
\end{cases}.
\end{align}

\item According to (\ref{eq:construct-affine}), by using a network
      component,
we obtain an $(m+1)\times(m+1)$ matrix 
$\Z^{(7)}:=\Z^{(7)}(\Z^{(6)})$ whose $(i,j)$-entry is
\begin{align}
\Z^{(7)}[i,j]=\idrelu(\Z^{(6)}[i,j])+C_{i,j}=
\begin{cases}
1/q_{i,j} & i=j=t-1\\
1 & i=j\neq t-1\\
0 & {\rm otherwise}
\end{cases},
\end{align}
where $C_{i,j}$ is set to $C_{i,j}=1$ if $i=j\neq t-1$ and $0$ otherwise. 

\item By receiving $\Z^{(3)}$ from a skip connection and using an
      anti-mask operation, we obtain
$\Q_{t-1}:=\oM_{t-1,t-1,t-1,t-1}^{m+1,m+1}\Z^{(6)}\Z^{(3)}$, which is given by
\begin{align}
\Q_{t-1}= 
\begin{bmatrix}
q_{1,1} &  \cdots & q_{1,t-2} & q_{1,t-1} & q_{1,t} & \cdots  & \xi_{1,t-1}\\
\cdots &  \cdots & \cdots & \cdots  & \cdots & \cdots & \cdots\\
        0 & \cdots & q_{t-2,t-2}& q_{t-2,t-1} & q_{t-2,t} & \cdots & \xi_{t-2,t-1}\\
        0 & \cdots & 0 & 0 & q_{t-1,t}/q_{t-1,t-1} & \cdots & \xi_{t-1}\\
        0 & \cdots & 0 & 0 & \cdots & \cdots &  \xi_t\\
\cdots & \cdots & \cdots  & \cdots & \cdots & \cdots\\
        0 & \cdots & 0 &       0   & \cdots & \cdots & \xi_m
\end{bmatrix},
\end{align}
where 
\begin{align}
\xi_{t-1}:=\xi_{t-1,t-1}/q_{t-1,t-1}
\end{align}
which is a solution to the $(t-1)$-th variable.

\end{itemize}

The module for computing a solution to $(m-s-1)$-th variable for
$s=0,1,\ldots,m-2$ is denoted by $\BS1_{m-s-1}$ and we have
$\Q_{m-s-1}=\BS1_{m-s-1}(\Q_{m-s})$.  Note that $\BS1_m$ is defined in
case of demonstrating a solver for the $m$-th variable. 
As a summary, let $\Q$ be an
$(m+1)\times (m+1)$ matrix, in which 
$\Q[1\:m,\:]$ is an upper triangular system of $m$ linear equations to
solved and $\Q[m+1,\:]=\O_{1,m+1}$.
We repeatedly compute $\Q_{k-1}=\BS1_k(\Q_k)$ for $k=m,\ldots,2$.
As a result, $\Q_1[1\:m,m+1]$ is a vector of solutions.
We refer to this module by $\BS_m$; i.e. $\Q_1=\BS_m(\Q_m)$.

\subsubsection{Remarks}

We have several remarks in the appendix.
\begin{itemize}

\item \cite{Akyurek2022} has also shown an implementation for solving
a system of linear equations, in which they implemented the solver using
      self-attention components while we do not consider a network
      structure.
The important point is the computational aspect of the implementations.
In implementing a computational algorithm, 
we need arithmetic operations, which are addition, subtraction,
      multiplication and division. The difficulty arises in the
      implementation multiplication and division operations.
In \cite{Akyurek2022}, multiplication was implemented by a network component
      using ReLU activation function and division by a batch normalization.
In our implementation, we use matrix multiplication for multiplication and 
a network component using a ReLU activation function for division. Note that 
the network component is also needed if the implementation requires the
      computation of nonlinear functions; e.g. loss functions, see
      \cite{Bai2023,Hataya2024}.

 \item The above implementation uses a masking operation, a
      multiplication type skip connection, a network component and
      matrix multiplication. The masking operation is implemented by a network
      component. However, we emphasize that the network component and
      the weighted sum operation cannot realize multiplication of two
      input matrices. The matrix multiplication plays a key role in
      transformers. And, the network component and the weighted sum
      operation are used before transformers; e.g. convolutional
      neural networks. These facts may imply that matrix multiplication 
      may be important for in-context learning.

\item We have implemented a Gaussian elimination procedure to solve a
      system of linear equations using standard components used in the
      construction of neural networks.  Unfortunately, although this
      implementation may be straightforward, it can be difficult to
      embed in a simple layered form that consists of sequentially
      connecting a common module. In addition, for example, if we consider
      in-context learning of ridge regression then we need a module that
      transforms the data into an input for the Gaussian elimination module.
      This may require a different form from the Gaussian elimination
      module.  Therefore, as used in \cite{Akyurek2022,Bai2023},
      in-context learning of the gradient descent type seems valid since
      it can be implemented by sequentially connecting a simple module
      that performs one-step update procedure.

\end{itemize}

\end{document}